\newtheorem{definition}{Definition}
\newtheorem{proposition}{\textbf{Proposition}}
\newtheorem{theorem}{\textbf{Theorem}}
\newcommand{\bmath}[1]{\mbox{ \boldmath $\!#1\!$ \unboldmath}}
\def\tdot#1{\smash{
\mathop{#1}\limits^{...}}}
\def\evat#1{\smash{
\mathop{ }_{#1}}}
\def\eqalign#1{\null\,\vcenter{\openup\jot \m@th
\ialign{\strut\hfil$\displaystyle{##}$&$
\displaystyle{{}##}$\hfil \crcr#1\crcr}}\,}
\def\eqalignno#1{\displ@y \tabskip=\centering
\halign to\displaywidth{\hfil$\@lign\displaystyle{##}$
\tabskip=0pt &$\@lign\displaystyle{{}##}$
\hfil\tabskip=\centering
&\llap{$\@lign##$}\tabskip=0pt\crcr #1\crcr}}
\def\leqalignno#1{\displ@y \tabskip=\centering
\halign to\displaywidth{\hfil$\@lign\displaystyle{##}$
\tabskip=0pt &$\@lign\displaystyle{{}##}$
\hfil\tabskip=\centering &\kern-\displaywidth\rlap{$\@lign##$}
\tabskip=\displaywidth\crcr #1\crcr}}
\def\eqors#1{\smash{
\mathop{=}\limits^{#1}}}
\begin{document}

\title{Global Vertices and the Noising Paradox
}


\author{Konstantinos~A.~Raftopoulos\thanks{K.Raftopoulos, raftop@image.ntua.gr,  Computer Science Division, National Technical University of Athens, Iroon Polytechneiou 9, 15780.},~Stefanos~D.~Kollias\thanks{S.Kollias, stefanos@cs.ntua.gr, Computer Science Division, National Technical University of Athens, Iroon Polytechneiou 9, 15780.},~Marin~Ferecatu\thanks{M.Ferecatu, marin.ferecatu@cnam.fr, Conservatoire National des Arts et Métiers, Laboratoire CEDRIC, Equipe Vertigo, 292 rue Saint-Martin 75003 Paris~-~France.}}

\date{}

\maketitle

\begin{abstract}
A theoretical and experimental analysis related to the identification of vertices of unknown  shapes is presented. Shapes are seen as real functions of their closed boundary. Unlike traditional approaches, which see curvature as the rate of change of the tangent to the curve, an alternative \textit{global} perspective of curvature is examined providing insight into the process of noise-enabled vertex localization. The analysis leads to a paradox, that certain vertices can be localized \textit{better} in the presence of noise. The concept of \textit{noising} is thus considered and a relevant \textit{global} method for localizing \textit{Global Vertices} is investigated. Theoretical analysis reveals that induced noise can help localizing certain vertices if combined with global descriptors. Experiments with noise and a comparison to localized methods validate the theoretical results. 
\end{abstract}

\section{Introduction}
\label{intro}
\lettrine[lines=2]{C}{urvature}, as a descriptor of shape (e.g. describing the boundary of planar shapes) possesses a rare combination of good properties: It is intrinsic, intuitive, concise, well defined, analytic, extensively studied, well understood and of an undisputed perceptual importance. However, there are two serious problems concerning its such use. 
One has to do with noise. In a noisy curve, having, that is, high frequency Fourier components (hfFc) of no perceptual importance, the local nature of curvature restricts it in describing the noise itself rather than the underlying shape. Knowing whether hfFc of a curve represent noise or not, would require solving the harder problem of recognizing the object. Since hfFc might be defining for certain shapes or just noise in others, their presence in unrecognized (unknown) shapes is considered problematic, albeit they may present useful shape information. In practice, they are usually eliminated from the boundary of all shapes, by means of a blind step of smoothing, at the risk of losing useful discriminating shape information. Smoothing also distorts the shape's metrics in an unpredictable manner, a highly undesirable effect whenever certain \textit{morphometric} measurements are defining for classification. Another problem in relation to curvature as a descriptor, has to do with \textit{meaningfulness}. Even in noise free curves, the local nature of curvature doesn't permit any kind of \textit{context} by means of which one could differentiate between points of similar curvature with respect to their perceptual characteristics on different parts of the curve. The reason behind both of these problems is curvature's local nature. Any solution would have to defy the very definition of curvature. 

The local nature of curvature is challenged in this manuscript. An attempt to address the above problems is presented, based on a \textit{global} definition of curvature that permits noise invariance and differentiation based on \textit{location} perceptual characteristics. The global definition of curvature presented herein is based on the theoretical findings of \cite{CVIU}. Curvature at a point is seen in relation to the rate of change (per infinitesimal arc length) of the point's distance, to the rest of the curve. It is thus distance based and location dependent, a fact that is further investigated in this manuscript. This global definition of curvature is shown to \textit{absorb} noise and convey \textit{context} in a way that noise becomes an \textit{enabler} for vertex identification. The new concept of \textit{noising} (as opposed to smoothing) is thus conceived and a new method for identifying vertices without even having to calculate curvature is presented. Experiments with induced noise validate the theoretical results. 

\section{Related Work}\label{REL}
Shape descriptors have been used for pattern recognition in image processing applications for many decades now \cite{rel1}. Indeed, shape is a fundamental characteristic of objects in images, widely used in object matching, identification and classification tasks. Shape descriptors divide roughly between area based methods, which use interior shape points, and boundary based methods, which use boundary information only  \cite{rosin08}. Area based descriptions (such as geometric moments, moments invariants \cite{Flusser2009,Xu2008} and shape orientation \cite{ZunicRK06}, elongation \cite{Stojmenovic08} and circularity \cite{ZunicHR10}) are less accurate and thus easier to compute and can be applied to data coming from low quality sensors. On the other hand, boundary based methods (such as Curvature Scale Space \cite{Abbasi99}, Fourier descriptors \cite{ZhangL03} and shape contexts \cite{belongie02}) are more suitable for high precision computer vision tasks and need data of good quality, data which starts to become available today, due to recent progress in sensor technology \cite{Sladoje2009hbl}.

The method presented in this work relates closer to the last category: the general approach is to describe shapes by real functions \cite{enormus} with the goal of estimating local shape features (e.g. curvature estimation, corner detection). However, numerical differentiation is sensitive to noise, thus much effort in recent years focused in finding methods that are robust to noise. In this case, one either use a smoothing pre-processing step (which usually affects the shape in a non-predictable way \cite{porteous01}), or use approximation of the real data with smooth curves \cite{GoldfeatherI04,Razdan04}.
Due to the local nature of noise, global methods usually integrate over larger scale characteristics of the data, in an attempt to inject invariance into the algorithms \cite{rel6}. Some of these approaches include geometric flows \cite {Bajaj03}, integral invariants defined via distance functions \cite{rel2}, tensor voting \cite{rel6}, active shape models using prior distributions to support large deformations in diffeomorphic metric mappings \cite{rel13} and arc detection based on arithmetic discrete lines \cite{rel6}. For the integral invariant estimators, the authors of \cite{Coeurjolly13} obtain convergence results when the grid resolution tends to zero and provide explicit formulas for the kernel size, which guarantees uniform convergence for smooth enough curves.


A multi-scale corner detector has been proposed by the authors of \cite{rel4}. It first uses an adaptive local curvature threshold instead of a single global threshold as the method in CSS \cite{ZhangL03} does and then the angles of corner candidates are checked in a dynamic region of support for eliminating falsely detected corners. In \cite{Kerautret09}, a curvature estimator is proposed that considers all the possible shapes and selects the most probable one with a global optimization approach. The estimator uses local bounds on tangent directions defined by the maximal straight segments. The estimator is adapted to noisy contours by replacing maximal segments with maximal blurred digital straight segments.
The authors of \cite{Nguyen2011}, propose a method for dominant point detection and polygonal representation of noisy and possibly disconnected curves based on the decomposition of the curve into a sequence of maximal blurred segments.

Unlike the methods described above, for which noise has a detrimental effect and thus must be resisted, bypassed or eliminated, the approach presented herein improves on this point by transforming noise into a facilitating factor for vertex identification.

\section{Contribution}
This paper serves as an extension to the Raftopoulos and Ferecatu conference paper on \textit{Noising versus Smoothing} \cite{CVPR14} extending both its theoretical and experimental findings. 

\textit{Global Vertices} introduced here for the first time, engage global shape information in the inherently local task of identifying vertices.  Due to their location at the extremes of the boundary \textit{Global Vertices} are considered \textit{undisputed} points of extreme curvature exposing this way a \textit{hidden} relation between location and curvature. 
The present paper builds on the results of \cite{CVIU} and \cite{CVPR14} adding the following elements: 
\begin{itemize}
\item
The concept of \textit{Global Vertices} and the theoretical foundation of their existence. 
\item
A formal mathematical analysis on \textit{noising} and its relation to Global Vertices. 
\item
New experiments with noising for localizing Global Vertices and the such comparison to other methods in these tasks.
\end{itemize}


 The rest of the paper is structured as follows: Previous necessary material briefly revisited in section \ref{sec:VAR}, the Algebraic Interaction of Localities and a Zero Crossing Analysis of the second GL equation are presented in sections \ref{sec:ALG} and \ref{sec:zero} respectively. The method of Noising and its use in Global Vertex Localization are presented in section \ref{sec:noising1} while Experimental Evaluation is performed in section \ref{sec:exp}. A discussion in section \ref{sec:disc} closes the paper.

\begin{figure}[t]
\begin{center}
\includegraphics[height=8.0cm]{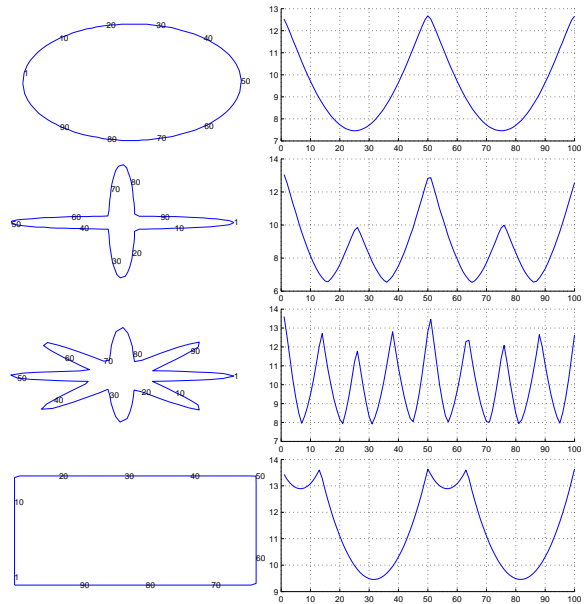}
\caption{Basic shapes (left column) and the corresponding View Area Representations (VARs, right column).}
\label{fig:allarea}
\end{center}
\end{figure} 
\section{Connection to Previous Material: The VAR descriptor and the Global-Local Equations}\label{sec:VAR}
For every point on a curve one may consider the sum of its distances to all the other points. In the discrete case this is a summation. In the continuous case it is an integral. This \textit{total distance} of a point to all the other points (or to the rest of the curve as one may choose to see it) captures a relationship between \textit{location} and curvature and this is important because location, as opposed to curvature, is not affected by noise. The VAR descriptor introduced in \cite{CVIU} is the above normalized \textit{total distance} and the \textit{Global Local (GL) Equations}\cite{CVPR14} are based on VAR to describe relations between location and curvature on the curve. In this section we briefly introduce the reader to the essentials on VAR and the GL equations. To simplify things all curves are assumed with the necessary continuity, differentiability and regularity assumptions, details on these can be found in \cite{CVIU}. These assumptions are necessary because differential geometry will be used and the mathematics have to make sense. However, these assumptions don't restrict in any way the usual reality about digital curves. Any continuous digital curve (polyline) can acquire these properties after a suitable treatment (smoothing) at its corners at an infinitesimal level that doesn't change the discrete  appearance of the curve. In Fig. (\ref{fig:allarea}) various basic shapes, with corners or/and vertices (generalization of a corner) are shown together with their VAR descriptors. Even though VAR describes the \textit{location} of the points on the curve, it is obvious that VAR's local extrema are related to corners/vertices in a \textit{global} way that is not affected by noise. The resistance to noise has been investigated in \cite{CVIU} with various experiments. In \cite{CVPR14} things advanced one step further showing that noise actually helps this representation of vertices, a concept that will be further analyzed in this manuscript.        
We proceed formally now introducing the necessary mathematical concepts that will be used for further analysis. Let $(0,\lambda\rbrack\subset\mathbb{R}$ and $\bmath{\alpha} : (0,\lambda\rbrack\rightarrow \mathbb{R} ^2$ a continuous one to one (equivalent to non self intersecting), {\it planar curve of \textit{non zero} length $\lambda$} in $\mathbb{R} ^2$, parametrized with respect to the arc length $s$ with all the necessary assumptions. 

Let us consider $\mathbb{R} ^2$ endowed with the usual metric, the plane distance, derived by the usual norm ${\Vert . \Vert}$ and the trajectory of $\bmath{\alpha}$ as $\bmath{\alpha} ((0,\lambda\rbrack)$ (the image of the interval $(0,\lambda\rbrack$ through the vector function $\bmath{\alpha}$). 

\begin{definition}
Let $\bmath{x_0}=\bmath{\alpha}(s_0)$ be a point of $\bmath{\alpha}$ and $\Delta$ the diameter\footnote{Supremum of all pair-wise distances between points of $\bmath\alpha$.} of $\bmath\alpha$. The function:
\rm
\begin{equation}
\eqalign{
v_{s_0}:(0,\lambda\rbrack\rightarrow (0,\Delta\rbrack,s\mapsto v_{s_0}(s):=\Vert\bmath{x_0}-\bmath{\alpha}(s)\Vert
}
\end{equation}
\noindent
we call the {\it view of $\bmath{\alpha}$ from the point $\bmath{x_0}\equiv\bmath{\alpha}(s_0)$}.
\end{definition}
~\\[1pt]
The \textit{view} function $v_{s_0}$ is well defined given a choice of $\bmath{x_0}$, it is also continuous as a norm in $\mathbb{R}^2$. The view functions map the arc length to the chord length of the curve. 
Lets denote with $s_*$ a certain value of the arc length parameter corresponding to a point on the curve where the unit tangent and unit normal vectors are now considered. $\dot{\bmath{\alpha}}(s_*)=\bmath{t}$ (the dot denotes the derivative with respect to the arc length $s$) is the unit vector, tangent to the curve at $\bmath\alpha (s_*)$ and $\bmath{n}$ is the normal \textit{inward} \footnote{Directed to the interior of the bounded region defined by $\bmath{\alpha}$.} unit vector also at $\bmath\alpha (s_*)$, with 
 \begin{equation}
 \eqalign{
 \frac{d\bmath t} {ds} = -\kappa(s_*)\bmath n,  \frac{d\bmath n} {ds} = \kappa(s_*)\bmath t
 }
\end{equation}
\noindent
the {\it Frenet equations} for curve $\bmath{\alpha}$ at $\bmath{p}=\bmath{\alpha}(s_*)$, with $\kappa(s_*)$ being the curvature at $\bmath{p}$, as shown in Fig. \ref{fig:Viewcut}. For the curvature $\kappa(s_*)$, we have adopted the sign convention, according to which, $\kappa(s)$ is positive when $\bmath{n}(s)$  points \textit{away} from the center of curvature.  We also consider any choice of $\xi\in (0,\lambda\rbrack, \xi\ne s_*$ and the respective point $\bmath{q}=\bmath{\alpha}(\xi)$.  We call $\bmath{r}(s_*,\xi)$ the vector $\bmath{\alpha}(s_*)-\bmath{\alpha}(\xi)$ and $\omega(s_*,\xi)$ the angle from $\bmath{n}(s_*)$ to $-\bmath{r}(s_*,\xi)$, measured counter-clockwise. Angle $\omega$ is a scalar function of two variables, $s$ and $\xi$; and $\bmath{r}$ is a vector function of the same two variables; in both cases the variables will be omitted in the notation from now on. 
\begin{proposition}\label{Prop:1}
Let $\bmath{\alpha}(s):(0,\lambda\rbrack)\rightarrow\mathbb{R}^2$ a closed, planar curve of nonzero length $\lambda$, as above. Let $\bmath{p}=\bmath{\alpha}(s_*)$ a point on the curve and $\bmath{q}=\bmath{\alpha}(\xi), \xi\ne s_*$ another point on the curve with $v_\xi(s)$ the respective view function. Then:
\begin{equation}
\label{dv4}
\eqalign{
\frac{d}{ds} v_{\xi}(s_*)=\left.-sin(\omega)\middle|\evat{s=s_*}\right.
}
\end{equation}
and
\begin{equation}
\label{dv3a}
\eqalign{
\frac{d^2v_{\xi}}{ds^2}(s_*)=\kappa(s_*)\left.cos(\omega)\middle|\evat{s=s_*}\right.  +\left.\frac{cos^2(\omega)}{\Vert\bmath{r}\Vert}\middle|\evat{s=s_*}\right.
}
\end{equation}
  
\begin{figure}
\centering
\includegraphics[height=5cm]{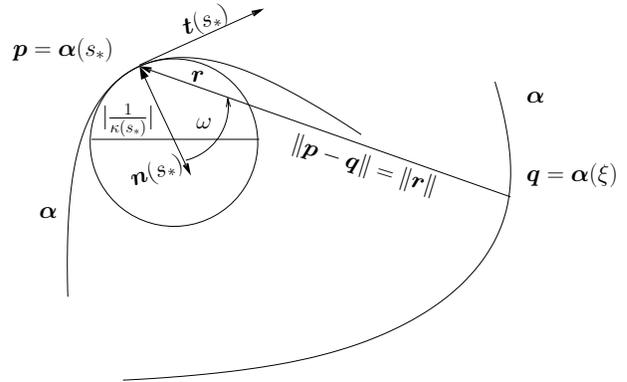}
\caption{Local bindings of the view functions. Two portions of the same boundary, the osculating circle, the Frenet frame and the view function $v_{\xi}(s_*)=\Vert\bmath{r}\Vert$.}
\label{fig:Viewcut}
\end{figure}

\end{proposition}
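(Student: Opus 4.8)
The plan is to compute the first two arc-length derivatives of the view function $v_{\xi}(s)=\Vert\bmath{\alpha}(s)-\bmath{\alpha}(\xi)\Vert$ directly, keeping $\xi$ fixed, and then to rewrite the resulting inner products through the angle $\omega$. I set $\bmath{R}(s)=\bmath{\alpha}(s)-\bmath{\alpha}(\xi)$, so that $v_{\xi}(s)=\Vert\bmath{R}(s)\Vert$ and $\bmath{R}(s_*)=\bmath{r}$. Since $\bmath{\alpha}$ is parametrized by arc length, $\dot{\bmath{R}}=\dot{\bmath{\alpha}}=\bmath{t}$ is a unit vector, and by the Frenet equations stated above $\ddot{\bmath{R}}=\dot{\bmath{t}}=-\kappa\bmath{n}$. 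These two facts, together with a single resolution of $\bmath{r}$ in the Frenet frame, supply everything the computation needs.

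For the first derivative I would differentiate the identity $v_{\xi}^{2}=\bmath{R}\cdot\bmath{R}$, obtaining $v_{\xi}\,\dot v_{\xi}=\bmath{R}\cdot\bmath{t}$ and hence $\dot v_{\xi}=(\bmath{R}\cdot\bmath{t})/v_{\xi}$. The whole statement then reduces to projecting $\bmath{r}$ onto the orthonormal frame $\{\bmath{t},\bmath{n}\}$. Because $\omega$ is the counterclockwise angle from $\bmath{n}$ to $-\bmath{r}$, the unit vector $-\bmath{r}/\Vert\bmath{r}\Vert$ has component $\cos\omega$ along $\bmath{n}$ and $\sin\omega$ along $\bmath{t}$, so that $\bmath{r}\cdot\bmath{t}=-\Vert\bmath{r}\Vert\sin\omega$ and $\bmath{r}\cdot\bmath{n}=-\Vert\bmath{r}\Vert\cos\omega$. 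Substituting the first projection and $v_{\xi}(s_*)=\Vert\bmath{r}\Vert$ into $\dot v_{\xi}$ gives $\dot v_{\xi}(s_*)=-\sin\omega$, which is (\ref{dv4}).

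For the second derivative I would differentiate $\dot v_{\xi}=(\bmath{R}\cdot\bmath{t})/v_{\xi}$ once more by the quotient rule. Using $\dot{\bmath{R}}=\bmath{t}$ with $\bmath{t}\cdot\bmath{t}=1$ and $\dot{\bmath{t}}=-\kappa\bmath{n}$ yields $\frac{d}{ds}(\bmath{R}\cdot\bmath{t})=1-\kappa(\bmath{R}\cdot\bmath{n})$, so that $\ddot v_{\xi}=[1-\kappa(\bmath{R}\cdot\bmath{n})]/v_{\xi}-(\bmath{R}\cdot\bmath{t})^{2}/v_{\xi}^{3}$. Evaluating at $s=s_*$ and inserting the two projections above turns this into $\kappa\cos\omega+(1-\sin^{2}\omega)/\Vert\bmath{r}\Vert$, and the identity $1-\sin^{2}\omega=\cos^{2}\omega$ then reproduces (\ref{dv3a}) exactly.

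The computational core, two differentiations plus the quotient rule, is routine, so the one place that genuinely requires care is the resolution of $\bmath{r}$ in the Frenet frame. The paper adopts the non-standard pairing $\dot{\bmath{t}}=-\kappa\bmath{n}$, $\dot{\bmath{n}}=\kappa\bmath{t}$ together with the \emph{inward} normal, and this choice fixes the handedness of $\{\bmath{t},\bmath{n}\}$. I expect the main obstacle to be verifying that the counterclockwise measurement of $\omega$ from $\bmath{n}$ to $-\bmath{r}$ is consistent with that handedness, so that the projections come out as $\bmath{r}\cdot\bmath{t}=-\Vert\bmath{r}\Vert\sin\omega$ and $\bmath{r}\cdot\bmath{n}=-\Vert\bmath{r}\Vert\cos\omega$ rather than with flipped signs. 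Pinning down this sign bookkeeping, for instance by taking $\bmath{t}=(1,0)$ and deducing $\bmath{n}=(0,-1)$ from the Frenet convention, is precisely what makes both displayed formulas land with the stated signs.
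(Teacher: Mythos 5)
The paper itself contains no proof of Proposition \ref{Prop:1}: it is stated and then used, with its derivation deferred to the cited prior work \cite{CVIU}, so there is no in-paper argument to compare yours against. On its own merits, your computation is correct and is the natural route: differentiating $v_\xi^2=\bmath{R}\cdot\bmath{R}$ to get $\dot v_\xi=(\bmath{R}\cdot\bmath{t})/v_\xi$, then the quotient rule together with $\frac{d}{ds}(\bmath{R}\cdot\bmath{t})=1-\kappa(\bmath{R}\cdot\bmath{n})$ (using the paper's convention $\dot{\bmath{t}}=-\kappa\bmath{n}$), and finally the Frenet-frame resolution of $\bmath{r}$; with $\bmath{r}\cdot\bmath{n}=-\Vert\bmath{r}\Vert\cos\omega$ and $(\bmath{r}\cdot\bmath{t})^2=\Vert\bmath{r}\Vert^2\sin^2\omega$ this yields $\ddot v_\xi(s_*)=\kappa\cos\omega+\cos^2\omega/\Vert\bmath{r}\Vert$ exactly as in (\ref{dv3a}).

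The one place where your argument overreaches is the claim that the handedness of $\{\bmath{t},\bmath{n}\}$ --- hence the sign in $\bmath{r}\cdot\bmath{t}=-\Vert\bmath{r}\Vert\sin\omega$ --- can be \emph{deduced} from the Frenet convention, e.g.\ that $\bmath{t}=(1,0)$ forces $\bmath{n}=(0,-1)$. It cannot: the equations $\dot{\bmath{t}}=-\kappa\bmath{n}$, $\dot{\bmath{n}}=\kappa\bmath{t}$ hold for either choice of unit normal, since replacing $\bmath{n}$ by $-\bmath{n}$ merely flips the sign of $\kappa$, and the paper's geometric sign rule (with $\bmath{n}$ inward) is likewise satisfied for either traversal direction of the closed curve. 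What actually fixes the handedness is the traversal orientation, which the paper never specifies: if the curve is traversed so that $\bmath{t}$ is $90^\circ$ counterclockwise from the inward normal (clockwise traversal of a convex region), your projections and both formulas hold; with the opposite traversal one gets $\bmath{r}\cdot\bmath{t}=+\Vert\bmath{r}\Vert\sin\omega$ and (\ref{dv4}) acquires the opposite sign. So your sign bookkeeping is not a deduction but a necessary orientation assumption --- the one under which the paper's statement is literally true --- and your proof should state it as such. Note also that, contrary to your closing remark, only (\ref{dv4}) is sensitive to this choice: (\ref{dv3a}) involves only $\bmath{r}\cdot\bmath{n}$ and $(\bmath{r}\cdot\bmath{t})^2$, both of which are orientation-independent, so it lands with the stated signs regardless. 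This ambiguity is a defect of the paper's conventions rather than of your computation, but as written your justification of the key sign is not valid reasoning.
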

~\\[1pt]
This last proposition 
relates the planar curvature of $\bmath{\alpha}$ at $\bmath{p}$ to the second derivative of some view function at $s_*=\alpha^{-1}(\bmath{p})$ and we notice that, given $\bmath{p}$, the result holds for \textit{any} view function, since, given $s_*$, the choice of $\xi$ is arbitrary. This last observation permits the definition of VAR, as a generalization over the view functions as follows. 

Consider $S_\alpha$ the set of all the view functions of $\bmath\alpha$ and ${\Vert.\Vert}_v$ measuring the area below them as:
\begin{equation}
\eqalign{
{\Vert.\Vert}_v:S_\alpha\rightarrow\mathbb{R},v_s\mapsto{\Vert v_s\Vert}_v:=\int_0^\lambda v_s(\xi) d\xi
}
\end{equation}

\begin{definition}
 Let $\varphi_\alpha$ be a function defined on $(0,\lambda\rbrack$ and taking values in $\mathbb{R}$ as follows:
\begin{equation}
\eqalign{
\varphi_\alpha:(0,\lambda\rbrack\rightarrow\mathbb{R}\colon s\mapsto\varphi_\alpha(s):=\Vert v_{s}\Vert_v
}
\end{equation}
We call $\varphi_\alpha$ the {\it View Area Representation} or \textit{VAR} of the curve $\bmath{\alpha}\Box$. 
\end{definition}
~\\[1pt]
VAR is parametrized by the arc length and maps a point $\bmath{\alpha}(s)$ (by means of the arc length parameter $s$), to the area $\varphi_\alpha(s)$ below the view function $v_s$. VAR can be seen as measuring the \textit{distance} each point has from the rest of the curve, 
so it increases as we move to \textit{remote points} on the curve. 
In the form of a Theorem, we gather results from \cite{CVIU}. Dots represent derivatives always with respect to $s$. 
\begin{theorem}\label{propcurv}
Let $\bmath{\alpha}\in C^3( (0, \lambda\rbrack, \mathbb{R}^2)$ a closed planar curve of nonzero length $\lambda$, as above. If  $\varphi_\alpha (s)$ the total distance function (VAR descriptor), $\kappa (s)$ the curvature function and $s_*,\xi,\bmath{r}$ and $\omega$ as above, then:
\begin{enumerate}
\item
\begin{equation}
\label{eq:L156}
\eqalign{
 \dot\varphi_\alpha (s_*)=-\left.\int_0^\lambda sin(\omega)d\xi\middle|\evat{s=s_*}\right.
}
\end{equation}
\item 
\begin{equation}
\label{eq:L15aa}
\eqalign{
\ddot\varphi_\alpha(s_*)=\kappa(s_*)A(s_*)+B(s_*)
}
\end{equation}

where  $A(s_*)=\left.\int_0^\lambda cos(\omega)d\xi\middle|\evat{s=s_*}\right.$ and $B(s_*)=\left.\int_0^\lambda\frac{cos^2(\omega)}{\Vert \bmath{r}\Vert}d\xi\middle|\evat{s=s_*}\right.$ global shape descriptors measured at $\bmath\alpha (s_*)$. 
\item 
 If in addition, $\varphi_\alpha(s_*)$ a local extremum of $\varphi_\alpha(s)$. Then $\kappa(s_*)\ne 0$ and $A(s_*)\ne 0$ and
\begin{equation}
\label{eq:L15aaa}
\eqalign{
\kappa(s_*)=\frac{\ddot\varphi_\alpha(s_*)-B(s_*)}{A(s_*)}
}
\end{equation}
\end{enumerate}
\end{theorem}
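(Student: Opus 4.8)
The plan is to obtain parts (1) and (2) by differentiating the defining integral $\varphi_\alpha(s)=\int_0^\lambda v_s(\xi)\,d\xi$ twice under the integral sign and substituting the pointwise derivatives supplied by Proposition \ref{Prop:1}. Since $v_s(\xi)=\Vert\bmath{\alpha}(s)-\bmath{\alpha}(\xi)\Vert=v_\xi(s)$ by symmetry of the norm, the $s$-derivatives of the integrand are exactly the right-hand sides of (\ref{dv4}) and (\ref{dv3a}). The only delicate point in justifying the interchange of $\frac{d}{ds}$ and $\int_0^\lambda(\cdot)\,d\xi$ is the diagonal $\xi=s_*$, where $v_s(\xi)$ loses smoothness; but the first-order integrand $-\sin\omega$ is bounded, and near $\xi=s_*$ one has $-\bmath{r}\approx\bmath{t}\,(\xi-s_*)$, so that $\cos\omega=O(\xi-s_*)$ and $\Vert\bmath{r}\Vert=O(|\xi-s_*|)$ force the second-order integrand $\kappa\cos\omega+\cos^2\omega/\Vert\bmath{r}\Vert$ to stay bounded as well. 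With bounded integrands on a compact interval, dominated convergence legitimizes both differentiations.

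Granting this, part (1) is immediate: $\dot\varphi_\alpha(s_*)=\int_0^\lambda\frac{d}{ds}v_\xi(s)\big|_{s_*}\,d\xi=-\int_0^\lambda\sin\omega\,d\xi\big|_{s=s_*}$, which is (\ref{eq:L156}). For part (2) I would differentiate once more, insert (\ref{dv3a}), and use that $\kappa(s_*)$ is constant in the integration variable $\xi$ so it factors out, yielding $\ddot\varphi_\alpha(s_*)=\kappa(s_*)\int_0^\lambda\cos\omega\,d\xi+\int_0^\lambda\frac{\cos^2\omega}{\Vert\bmath{r}\Vert}\,d\xi=\kappa(s_*)A(s_*)+B(s_*)$, i.e. (\ref{eq:L15aa}). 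This essentially mechanical substitution is where all the content of Proposition \ref{Prop:1} is spent.

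For part (3) the formula (\ref{eq:L15aaa}) is pure algebra once $A(s_*)\neq0$, so the real work is the two non-vanishing assertions. A local extremum gives $\dot\varphi_\alpha(s_*)=0$, hence by part (1) $\int_0^\lambda\sin\omega\,d\xi=0$. Writing the unit direction to each point as $\frac{\bmath{\alpha}(\xi)-\bmath{\alpha}(s_*)}{\Vert\bmath{\alpha}(\xi)-\bmath{\alpha}(s_*)\Vert}=\cos\omega\,\bmath{n}+\sin\omega\,\bmath{t}$ and setting $\bmath{D}(s_*):=\int_0^\lambda\frac{\bmath{\alpha}(\xi)-\bmath{\alpha}(s_*)}{\Vert\bmath{\alpha}(\xi)-\bmath{\alpha}(s_*)\Vert}\,d\xi$, the vanishing first derivative reads $\bmath{t}\cdot\bmath{D}(s_*)=0$, so $\bmath{D}(s_*)$ is normal to the curve and $A(s_*)=\bmath{n}\cdot\bmath{D}(s_*)=\pm\Vert\bmath{D}(s_*)\Vert$. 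Thus $A(s_*)\neq0$ reduces to $\bmath{D}(s_*)\neq0$, i.e. to $\bmath{\alpha}(s_*)$ not being a critical point of the plane function $\psi(\bmath{x})=\int_0^\lambda\Vert\bmath{x}-\bmath{\alpha}(\xi)\Vert\,d\xi$. As $\psi$ is strictly convex---its integrand is convex in $\bmath{x}$, strictly so except along the single line through $\bmath{x}$ and $\bmath{\alpha}(\xi)$, and a closed curve lies in no line---$\psi$ has a unique critical point, its global minimizer; so $A(s_*)=0$ could occur only if the curve ran through that one interior point, and excluding this is where I expect the main difficulty to lie.

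For $\kappa(s_*)\neq0$ I would turn part (2) around at the extremum: since $B(s_*)=\int_0^\lambda\cos^2\omega/\Vert\bmath{r}\Vert\,d\xi$ is strictly positive in the non-degenerate case, $\kappa(s_*)=0$ would force $\ddot\varphi_\alpha(s_*)=B(s_*)>0$, which is incompatible with a local maximum; the genuinely delicate case is a local minimum, which I would rule out by the same extremal analysis of $\bmath{D}(s_*)$ together with the sign of $A(s_*)$ fixed by $\bmath{n}$ being the inward normal. I expect this coupling of the first-order condition $\bmath{D}(s_*)\parallel\bmath{n}$ with the second-order data to be the true obstacle, and it is precisely the ingredient the statement imports from \cite{CVIU}.
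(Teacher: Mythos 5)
Your treatment of parts (1) and (2) is correct and is essentially the only natural argument: by the symmetry $v_s(\xi)=v_\xi(s)$ one writes $\varphi_\alpha(s)=\int_0^\lambda v_\xi(s)\,d\xi$, differentiates twice under the integral sign, and inserts Proposition~\ref{Prop:1}; your estimate $\cos\omega=O(|\xi-s_*|)$, $\Vert\bmath{r}\Vert=O(|\xi-s_*|)$ near the diagonal is the right justification for the interchange. Note that the paper itself offers no proof of this theorem at all --- it explicitly ``gathers results from \cite{CVIU}'' --- so part (3), the only part with real content beyond Proposition~\ref{Prop:1}, is precisely where a self-contained proof must do work, and that is where your proposal stops short.

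The gap in part (3) is genuine, and you concede it twice (``excluding this is where I expect the main difficulty to lie''; ``I expect this coupling \dots to be the true obstacle''). Two concrete points. First, for a local \emph{maximum} you do not need the convexity/median detour at all: since $B(s_*)>0$ (the integrand $\cos^2\omega/\Vert\bmath{r}\Vert$ is nonnegative and cannot vanish for a.e.\ $\xi$, as the curve does not lie in its tangent line at $s_*$), a local maximum gives $\ddot\varphi_\alpha(s_*)\le 0$, hence $\kappa(s_*)A(s_*)=\ddot\varphi_\alpha(s_*)-B(s_*)\le -B(s_*)<0$, which yields $\kappa(s_*)\ne 0$ \emph{and} $A(s_*)\ne 0$ simultaneously; you noticed this only for $\kappa$ and only as a sign remark. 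Second, and more seriously, the local \emph{minimum} case you defer cannot be closed from the stated hypotheses by any argument, because the claim is false there without extra assumptions: take a $C^3$ simple closed curve, symmetric about the $y$-axis, whose lowest arc is locally the graph $y=x^4$ (or contains a straight horizontal segment centered on the axis). At the bottom point $s_*$ symmetry forces $\dot\varphi_\alpha(s_*)=0$, while $\kappa(s_*)=0$, so part (2) gives $\ddot\varphi_\alpha(s_*)=B(s_*)>0$: thus $\varphi_\alpha$ has a strict local minimum at $s_*$ with $\kappa(s_*)=0$, contradicting the asserted non-vanishing. So what is missing at minima is not a cleverer completion of your (correct) reduction of $A(s_*)\ne 0$ to ``the curve avoids the geometric median of $\psi$,'' but additional hypotheses on the curve that this paper buries in ``as above'' and imports from \cite{CVIU}; as written, your proposal cannot be completed.
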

\noindent
~\\[1pt]
\noindent
As we can see from  theorem \ref{propcurv} above, the integral descriptor $A$ has a similar interpretation with $\dot\varphi$. They both quantify a notion of global displacement of the whole curve with respect to the normal at $s_*$. Indeed, if we consider a point $\bmath{\alpha}(\xi)$ that traverses the curve, angle $\omega$ measures the angular displacement of this point with respect to the normal at $s_*$. Thus the integral $A$ (and $\dot\varphi$) can be thought of measuring the \textit{total angular displacement} of the whole curve with respect to the normal at $s_*$. 
The integral descriptor $B$ has a similar interpretation.

\section{An Algebra of Local Function Behaviors}\label{sec:ALG}

To gain further insight into the Global Local equations and what they say about the location of a vertex, we proceed in this section by introducing a algebraic formalism on local function behaviors. The purpose is to simplify the analysis of the GL equations, at least in terms of local extrema localization and magnitude but we feel it also carries a conceptual merit as an algebraic treatment of \textit{infinitesimals}, building on a combination of algebraic and analytical ideas. The focus will be on the localization of \textit{zero crossings} because of their significance in identifying vertices on curves but could be extended to any local function behavior.   

We start by extending the notion of a value of a function at a specific point to that of the \textit{set value} of the function \textit{around} this point. Thus, for some function $f$, instead of $f (s_*)$ we now have $f (\eta_{s_*})$, where $\eta_{s_*}$ is an infinitesimal neighborhood around $s_*$ and  $f (\eta_{s_*})$ an infinitesimal neighborhood around  $f (s_*)$ (continuity and differentiability conditions assumed as usual \textit{around} $s_*$). Now we distinguish the following local behaviors for the values of the function at $\eta_{s_*}$(i.d. around $s_*$):
\begin{table*}[]
\setlength{\tabcolsep}{0.70em}
\renewcommand{\arraystretch}{1.0}
\begin{tabular}{||c|c|c|c|c||}

$\oplus$&$zdc_f$&$zuc_f$&$cz_f$&$nz_f$\\\hline\hline
$zdc_g$&$zdc_{f+g}$&$zc_{f+g}(cz_{f+g})$&$zdc_{f+g}$&$nz_{f+g}$\\
$zuc_g$&$zc_{f+g}(cz_{f+g})$&$zuc_{f+g}$&$zuc_{f+g}$&$nz_{f+g}$\\
$cz_g$&$zdc_{f+g}$&$zuc_{f+g}$&$cz_{f+g}$&$nz_{f+g}$\\
$nz_g$&$nz_{f+g}$&$nz_{f+g}$&$nz_{f+g}$&$nz_{f+g}(zc_{f+g} , cz_{f+g})$\\\hline

%
\end{tabular}
\centering
\caption{Local algebra for the $\oplus$ operation.}
\label{imsrank}
\end{table*}
\begin{itemize}
\item[(i)] Zero crossing at $\eta_{s_*}$, we write it as $zc_{f(s_*)}$ when the function's $f$ \textit{order of contact} with the horizontal axis, around $s_*$, is exactly one. This has sub-cases that will be signified by $zdc$ and $zuc$, meaning \textit{zero down crossing} and \textit{zero up crossing} respectively, with the obvious interpretations whenever their distinction plays a role. We will occasionally use $zc$ to refer to both of them simultaneously.  
\item[(ii)] Constant zero at $\eta_{s_*}$, we write it as $cz_{f(s_*)}$. when the function's \textit{order of contact} with the horizontal axis in an infinitesimal neighborhood around $s_*$ is greater or equal to 2; All the following cases are $cz$ at $\eta_{s_*}$:
\begin{itemize} 
\item Local extrema equals zero at $s_*$.
\item Saddle stationary point equals zero at $s_*$.
\item Constant function equals zero at $s_*$.
\end{itemize} 
\item[(iii)] Non zero at $\eta_{s_*}$, we write it as $nz_{f(s_*)}$ when the function's order of contact with the horizontal axis in an infinitesimal neighborhood around $s_*$ is zero.
\end{itemize}

\noindent The above \textit{local behaviors} form the set $LB\equiv\{lb_i\}=\{zuc,zdc,cz,nz\}$, called the \textit{set of local function behaviors} and are mutually exclusive and collectively exhaustive in that sense.
Since the local behaviors are considered for a function at a specific point we can consider them as an $l$ functoid defined on $C^3(( 0,\lambda\rbrack, \mathbb{R})\times( 0,\lambda\rbrack$ and taking values in $LB$ thus:

\begin{equation}
\label{eq:tenta}
\eqalign{
l&:C^3(( 0,\lambda\rbrack, \mathbb{R})\times( 0,\lambda\rbrack\rightarrow LB\cr
&:(f,s)\mapsto l(f,s)\equiv l(f(\eta_s))
}
\end{equation}
the local behavior of function $f$ around $s$. Through $l$ we tentatively define a \textit{local point-wise addition} of local behaviors as:

\begin{equation}
\label{eq:tent+}
\eqalign{
&\oplus :LB\times LB\rightarrow LB:\cr
&(l(f,s),l(g,s))\mapsto l(f,s)\oplus l(g,s) \equiv (l(f+g),s)
}
\end{equation}
thus the $\oplus$ of the local behaviors of $f$ and $g$ at $s$ is the local behavior of $f+g$ at $s$. 
Similarly, a \textit{local point-wise multiplication} is defined:
\begin{equation}
\label{eq:tentx}
\eqalign{
&\otimes:LB\times LB\rightarrow LB:\cr
&(l(f,s),l(g,s))\mapsto  l(f,s)\otimes l(g,s) \equiv (l(fg),s)
}
\end{equation}
One can verify that the rules of Table \ref{imsrank} hold for $\oplus$. For the multiple cases, the extra entries in parenthesis are exceptions. These depend on the slope and the specific values of the functions at and around $s_*$ but they are special cases and will be treated accordingly. Similarly, the rules of Table \ref{imsrank1} hold for $\otimes$. There,  $f$, $g$, $fg$ are easily assumed and thus omitted for economy of space.

\noindent



So far $\oplus$ has been considered through local point-wise operations of functions around a \textit{common} saturation point. We now extend this local operation to allow cases of different saturation points for the two functions. Thus:
\begin{definition}
\begin{equation}
\label{eq:tentx1}
\eqalign{
\oplus &:LB\times LB\rightarrow LB\cr
&:(l(f,s_1),l(g,s_2))\mapsto (l(f+g),s_*)}
\end{equation}, 
where $s_*$ is the closest to $s_1$ point in $[s_1,s_2]$ for which it holds that $\vert f(s_*)+g(s_*)\vert$ is a global minimum of $\vert f(s)+g(s)\vert$, $s\in[s_1,s_2]$ and $l(f+g,s_*)$ is the local behavior of $(f+g)(s)$ around $(f+g)(s_*)\Box$
\end{definition}
~\\[1pt]
We can verify that when $s_1=s_2$ then $s_*=s_1=s_2$ and the case reduces to the point-wise behavior around a common saturation point already examined above. But the above definition extends the capability of the $\oplus$ operation to consider different saturation points for the local behavior of the two functions taking part in $\oplus$, in a way that favors localization of new zero crossings.  Indeed, it is easy to see that according to the above definition: $zc_f(s_1)\oplus zc_g(s_2)=zc_{f+g}(s_*)$ iff $[s_1,s_2]$ is an interval where $f$ and $-g$ intercept each other at $s_*\in[s_1,s_2]$. 
For the $\otimes$ operation an analogous extension is not necessary but what is important to notice about $\otimes$ is that $zc_f(s_*)\otimes nz_g(s_*)=zc_{fg}(s_*)$ and that if $s_1=s_*-\epsilon$, $s_2=s_*+\epsilon$, $\displaystyle\lim_{s\to s_1}f(s)=\pm\infty$, $\displaystyle\lim_{s\to s_2}f(s)=\mp\infty$ and $g$ bounded in $(s_1,s_2)$ then $\displaystyle\lim_{s\to s_1}fg(s)=\pm\infty$ and $\displaystyle\lim_{s\to s_2}fg(s)=\mp\infty$  (or $\displaystyle\lim_{s\to s_1}fg(s)=\mp\infty$ and $\displaystyle\lim_{s\to s_2}fg(s)=\pm\infty$ depending on the sign of $g$ around $s_*$).
%
%
%
In words this last one states that if the absolute values of the zero crossing participant in the $\otimes$ operation increase without bound around  $s_*$ but the non zero participant is bounded around $s_*$, then the resulting from the $\otimes$ operation behavior  is a zero crossing with its absolute values increasing without bound around $s_*$. This observation is important because it affects the localization of the resulting from $\oplus$ zero crossing as the following Proposition states:
\begin{proposition}
\label{basicprop}
If $l(f,s_*)$ is a zero crossing ($zc$) of $f$ at $s_*$ and $l(g, s)$ a random local behavior ($lb$) of  $g$ at $s$, also $s_1=s_*-\epsilon$, $s_2=s_*+\epsilon$ such that $\displaystyle\lim_{s\to s_1}f(s)=\pm\infty$ and $\displaystyle\lim_{s\to s_2}f(s)=\mp\infty$, $f$ continuous in $(s_1,s_2)$, $g$ continuous and bounded in $(s_1, s_2)$ then:$\displaystyle\lim_{s_1\to s_*}(zc_f(s_*)\oplus lb_g(s_1))=\displaystyle\lim_{s_2\to s_*}(zc_f(s_*)\oplus lb_g(s_2))=zc_{f+g}(s_*).$
\end{proposition}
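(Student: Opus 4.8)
\emph{Proof plan.} The plan is to combine an elementary squeeze argument for the \emph{location} of the resulting saturation point with a sign / Intermediate Value Theorem argument for its \emph{behavior}. First I would record the localization fact: both saturation points produced by the two $\oplus$ operations lie in $[s_1,s_2]=[s_*-\epsilon,s_*+\epsilon]$, so any point $\sigma$ selected by the extended definition of $\oplus$ satisfies $|\sigma-s_*|\le\epsilon$. Letting $\epsilon\to 0$ (i.e. $s_1\to s_*$ for the first operation and $s_2\to s_*$ for the second) forces $\sigma\to s_*$ by the squeeze theorem. This already pins the localization of whatever behavior results at $s_*$, and the remaining work is to identify that behavior as a zero crossing.

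Next I would promote the blow-up of $f$ to a blow-up of $f+g$. Since $g$ is continuous and bounded on $(s_1,s_2)$, say $|g|\le M$, the estimate $f+g\ge f-M$ together with $\lim_{s\to s_1}f(s)=\pm\infty$ gives $\lim_{s\to s_1}(f+g)(s)=\pm\infty$, and likewise $\lim_{s\to s_2}(f+g)(s)=\mp\infty$. This is precisely the additive analogue of the multiplicative blow-up observation stated just before the Proposition. Hence $f+g$ is continuous on $(s_1,s_2)$ and changes sign across the interval, so by the Intermediate Value Theorem it possesses a zero, and, since the sign change is inherited from the unbounded (hence transversal) behavior of $f$, a genuine zero crossing $z$ of order of contact one.

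I would then invoke the extended definition of $\oplus$ to read off the behavior. On the subinterval bracketing $z$ the global minimum of $|f+g|$ equals $0$ and is attained exactly at $z$, so the definition selects $\sigma=z$ and returns $l(f+g,z)=zc_{f+g}(z)$. Combining this with the squeeze argument, $z\to s_*$ as $\epsilon\to 0$, so the operation returns $zc_{f+g}(s_*)$ in the limit. Because the sign change of $f$ straddles $s_*$, with $f$ tending to $+\infty$ on one side and to $-\infty$ on the other, the limiting crossing sits at $s_*$ irrespective of whether the interval is collapsed through $s_1$ or through $s_2$, which yields the claimed equality of the two one-sided limits.

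The step I expect to be the main obstacle is the behavior classification rather than the localization: localization is a one-line squeeze, whereas the delicate point is guaranteeing that the minimizer selected by the ``closest to the first saturation point'' rule lands on the true zero of $f+g$ and that the contact with the axis is exactly of order one (a $zc$), rather than degenerating to a nonzero minimum ($nz$) at the endpoint $s_*$ when, for finite $\epsilon$, the zero of $f+g$ happens to lie strictly on the opposite side of $s_*$. Controlling this requires exploiting that the boundedness of $g$ against the unbounded slope of $f$ makes the displacement $|z-s_*|$ of order $|g(s_*)|/|\dot f(s_*)|$, which vanishes as the asymptotes close in; it is this competition between the bounded perturbation and the diverging steepness of $f$ that must be made quantitative to turn the heuristic into a rigorous limit.
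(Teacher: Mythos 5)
Your proposal is correct and follows essentially the same route as the paper's proof: the paper likewise picks the point $s_0$ closest to $s_1$ where $f(s_0)=-g(s_0)$ (existence guaranteed, as you argue via the Intermediate Value Theorem, by the unbounded growth of $f$ against the boundedness of $g$), observes that $f+g$ has a zero crossing there, and concludes by the same squeeze $s_0\to s_*$ as the interval collapses. Your additional scrutiny of the order of contact and of the one-sided selection rule is a refinement the paper passes over silently, but it does not alter the argument's structure.
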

\begin{proof}
We first pick the closest to $s_1$ point $s_0\in (s_1,s_2)$ such that $f(s_0)=-g(s_0)$. Such a point always exists since the  values of $f$ increase/decrease inversely without bound near $s_1$ and $s_2$ and $g$ bounded in $(s_1,s_2)$. At this point $s_0$, $f+g$ has a zero crossing but as $s_1\to s_*$ and $s_2\to s_*$ then also $s_0\to s_*$ from either side and the proof is complete. 
\end{proof}
In words this proposition states that as the absolute values of $f$ increase around $s_*$ then the location at which $f\oplus g$ achieves a zero crossing approaches $s_*$ and it is a key proposition since it permits elimination of the bounded member in $\oplus$ (in terms of zero crossing localization), when the conditions of high values around the zero crossing (e.g. by means of the observation regarding $\otimes$ above) hold for the unbounded member. 

\section{Zero Crossing Analysis of the second Global Local Equation}\label{sec:zero}

Lets visit again equation (\ref{eq:L15aaa}) which links $\kappa$ to $\ddot\phi$ up to global descriptors $A$ and $B$. All the quantities on the right hand side are integrals defined on the whole of the shape. They don't change significantly with noise. This definition of curvature is thus \textit{stronger than the traditional one}. In Fig. (\ref{fig:ViewCurv}) we show the integral descriptors $A$, $B$ and $\ddot\phi$ involved in equation (\ref{eq:L15aaa}) for smoothed and noisy versions of Kimia silhouettes \cite{kimia}. 
 
While, as we will see in the experiments, this equation permits calculating $\kappa$ even in noisy conditions, a concern here will be whether it can be further simplified, if we are \textit{only} looking for vertices. The motivation for such a concern comes from the observation that in equation (\ref{eq:L15aaa}), $A$ and $B$ may contribute to the calculation of $\kappa$ but \textit{sudden changes} (per infinitesimal arc length) of $\kappa$ (e.g. vertices) could possibly be localized by only examining the \textit{sudden changes} of $\ddot\phi$.  
If further simplification of equation (\ref{eq:L15aaa}) can lead to a faster/simpler method of estimating vertices is significant because vertices summarize the perceptual importance of curvature, while at the same time traditional local methods cannot compute vertices in the presence of noise. This exploration will lead us to a surprising result, as we will discover that the rate of change of $A$ contributes a \textit{location} aspect to the calculation of vertices. Thus the quest for vertices becomes a quest for specific locations on the curve in a way that noise is turned into a facilitating factor that permits better localization. The concept of \textit{noising} is motivated by this observation. 
With the tools for local analysis presented in the previous section at hand and assuming usual valid continuity and differentiability conditions, we start with the derivative of $A$ at the origin of the Frenet frame $s_*$: 


\begin{equation}
\label{eq:dcos}
\eqalign{
\dot A(s_*)&=\left.\frac{d}{ds}\left(\int_0^\lambda cos(\omega)d\xi\right)\middle| \evat{s=s_*} \right.\cr
&=\left.\int_0^\lambda \frac{\partial}{\partial s} cos(\omega) d\xi\middle|\evat{s=s_*}\right.\cr
&=\left.-\int_0^\lambda \dot\omega sin(\omega) d\xi\middle|\evat{s=s_*}\right.
 }
\end{equation}

From the combination of   (\ref{dv4}) and (\ref{dv3a}) we get:
\begin{equation}
\label{eq:L12}
\eqalign{
-\frac{\partial}{\partial s} sin(\omega)&=-\dot\omega cos(\omega)=cos(\omega) \left(\kappa(s)+\frac{cos(\omega)}{\Vert\bmath{r}\Vert}\right)
}
\end{equation}

and thus for $cos(\omega)\ne0$ we get: 
\begin{equation}
\label{eq:L13}
\eqalign{
-\dot\omega=\kappa(s)+\frac{cos(\omega)}{\Vert\bmath{r}\Vert}
}
\end{equation}

Substituting $\dot\omega$ from above to (\ref{eq:dcos}):
\begin{equation}
\label{eq:L20}
\eqalign{
&\dot{A}(s_*)=\kappa(s_*)\left(\int_0^\lambda sin(\omega)d\xi\middle|\evat{s=s_*}\right)+\cr
&\left.\int_0^\lambda\frac{sin(\omega)cos(\omega)}{\Vert \bmath{r}\Vert}d\xi\middle|\evat{s=s_*}\right.
}
\end{equation}

and from equation (\ref{eq:L156}):
\begin{equation}
\label{eq:L20a}
\eqalign{
\dot{A}(s_*)&=-\kappa(s_*)\dot\phi (s_*)+\left.\int_0^\lambda\frac{sin(\omega)cos(\omega)}{\Vert \bmath{r}\Vert}d\xi\middle|\evat{s=s_*}\right.
}
\end{equation}

\begin{figure*}[t]
\makebox[\textwidth][c]{
\begin{minipage}[b]{8cm}
\includegraphics[height=8cm]{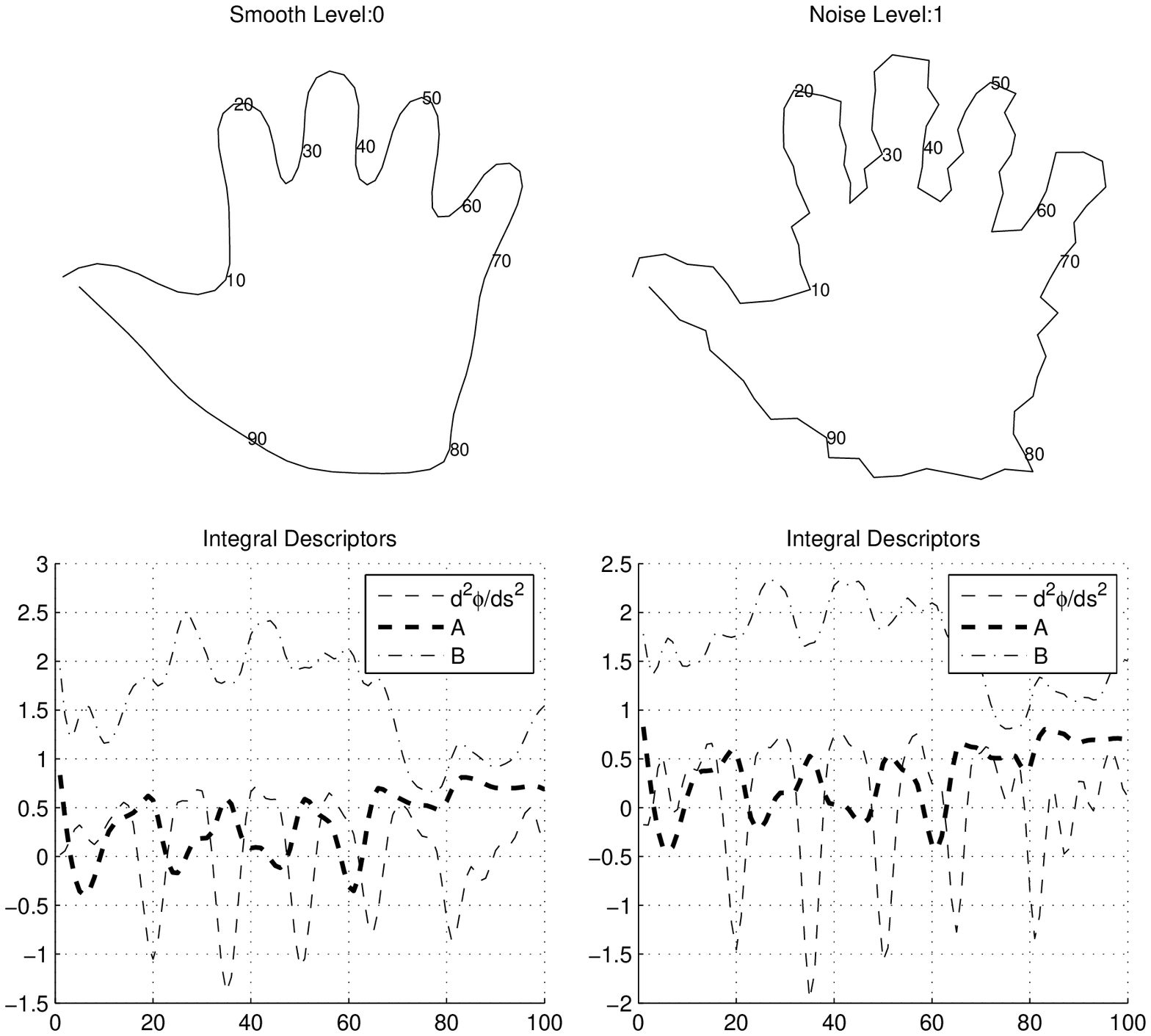}
\end{minipage}
\begin{minipage}[b]{8cm}
\includegraphics[height=8cm]{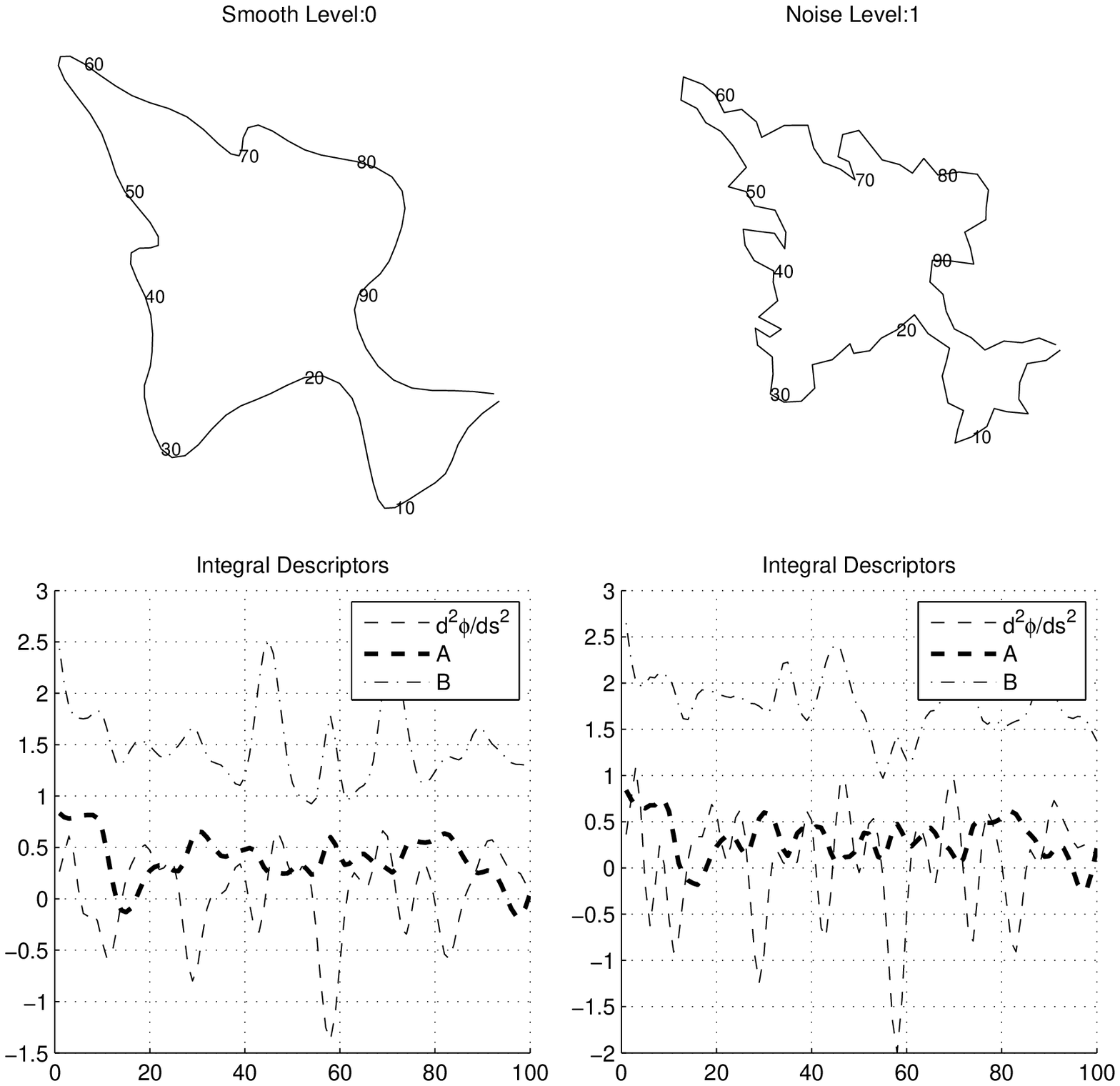}
\end{minipage}
}
\caption{The integral descriptors $A$, $B$ and $\ddot\phi$, in noisy and smoothed versions of Kimia silhouettes. Each shape is sampled by 100 equal spaced points marked in steps of 10 in the Figure and depicted in the x-axis of the corresponding plots. Notice there is no significant distortion due to noise. Curvature can be defined through these descriptors in a global manner.}
\label{fig:ViewCurv}
\end{figure*}
This last equation reveals an unexpected behavior of $\dot A$. It states that local extrema locations of $A$ and $\phi$ tend to each other as $\vert\kappa\vert$ increases. To see this we need to recall Proposition \ref{basicprop} and notice that the integral at the far right side of equation \ref{eq:L20a} is constant with respect to $\kappa$. According then to Proposition \ref{basicprop}, $\dot A$ achieves a zero crossing as close to a zero crossing of  $\dot\phi$ as large is $\vert\kappa\vert$. In other words, the location at which $\dot A$ achieves a zero crossing can be arbitrarily close to a zero crossing $\dot\phi (s_*)$ given large enough $\vert\kappa (s_*)\vert$. This result is encouraging because it links $\dot A$ with $\dot\phi$ in terms of zero crossing analysis and thus may lead to a further simplification of equation \ref{eq:L20a}, as was initially pursued. We continue in hopes that similar conclusions hold for $\dot B$:
\begin{equation}
\label{eq:dcos1}
\eqalign{
&\dot B(s_*)=\left.\frac{d}{ds}\left(\int_0^\lambda \frac{cos^2(\omega)}{\Vert r\Vert}d\xi\right)\middle| \evat{s=s_*} \right.\cr
&=\left.\int_0^\lambda \frac{\partial}{\partial s} \frac{cos^2(\omega)}{\Vert r\Vert} d\xi\middle|\evat{s=s_*}\right.\cr
&=\left.-2\int_0^\lambda\frac{\dot\omega cos(\omega)sin(\omega)}{\Vert r\Vert}d\xi\middle|\evat{s=s_*}\right. +\cr
&+\left.\int_0^\lambda\frac{cos^2(\omega)sin(\omega)}{\Vert r\Vert^2} d\xi\middle|\evat{s=s_*}\right.\cr
&\eqors{(\ref{eq:L12})}2\kappa(s_*)\left(\int_0^\lambda\frac{cos(\omega)sin(\omega)}{\Vert r\Vert}d\xi\middle|\evat{s=s_*}\right) +\cr
&+\left.2\int_0^\lambda\frac{cos^2(\omega)sin(\omega)}{\Vert r\Vert^2} d\xi\middle|\evat{s=s_*}\right.+\cr
&+\left.\int_0^\lambda\frac{cos^2(\omega)sin(\omega)}{\Vert r\Vert^2} d\xi\middle|\evat{s=s_*}\right.\cr
&=2\kappa(s_*)\left(\int_0^\lambda\frac{cos(\omega)sin(\omega)}{\Vert r\Vert}d\xi\middle|\evat{s=s_*}\right) +\cr
&+\left.3\int_0^\lambda\frac{cos^2(\omega)sin(\omega)}{\Vert r\Vert^2} d\xi\middle|\evat{s=s_*}\right.
 }
\end{equation}
$\dot B$ doesn't seem to follow a similar pattern but we proceed further anyways. Starting from (\ref{eq:L15aa}) and assuming valid continuity and differentiability conditions, we take derivatives on both sides with respect to $s$:
\begin{equation}
\label{eq:2curv4}
\eqalign{
\tdot\phi_\alpha(s_*)=A(s_*)\dot\kappa(s_*)+\kappa(s_*)\dot{A}(s_*)+\dot{B}(s_*)
 }
\end{equation}
and by means of equations (\ref{eq:L20a}) and (\ref{eq:dcos1}):
\begin{equation}
\label{eq:dcos2}
\eqalign{
&\tdot\phi (s_*)=\dot\kappa(s_*)A(s_*)-\kappa^2(s_*)\dot\phi (s_*)+\cr
&+3\kappa(s_*)\left(\int_0^\lambda\frac{cos(\omega)sin(\omega)}{\Vert r\Vert}d\xi\middle|\evat{s=s_*}\right)+\cr
&+\left.3\int_0^\lambda\frac{cos^2(\omega)sin(\omega)}{\Vert r\Vert^2} d\xi\middle|\evat{s=s_*}\right.
 }
\end{equation}
We realize that if we rewrite \ref{eq:dcos2} as:
\begin{equation}
\label{eq:oplusA}
\eqalign{
&\tdot\phi (s_*)=\dot\kappa (s_*)A(s_*)\cr
&-\kappa (s_*)\left(\kappa (s_*)\dot\phi (s_*)-3\left.\int_0^\lambda\frac{cos(\omega)sin(\omega)}{\Vert r\Vert}d\xi\middle|\evat{s=s_*}\right.\right)\cr
&+\left.3\int_0^\lambda\frac{cos^2(\omega)sin(\omega)}{\Vert r\Vert^2} d\xi\middle|\evat{s=s_*}\right.
}
\end{equation}
we can proceed with a zero crossing analysis according to the theory established in section \ref{sec:ALG}.
Indeed at a zero crossing $\dot\phi (\eta_{s_*})$ and as $\vert\kappa(s_*)\vert\rightarrow\infty$, equation (\ref{eq:oplusA}) can be now examined at a zero crossing basis as:
\begin{equation}
\label{eq:oplusB}
\eqalign{
&\tdot\phi (\eta_{s_*})=\dot K(\eta_{s_*})\otimes A(\eta_{s_*})\oplus -K (\eta_{s_*})\cr
&\otimes\left[\left(K(\eta_{s_*})\otimes\dot\phi (\eta_{s_*})\right)\oplus -3\left.\int_0^\lambda\frac{cos(\omega)sin(\omega)}{\Vert r\Vert}d\xi\middle|\evat{s\in\eta_{s_*}}\right.\right]\cr
&\oplus\left.3\int_0^\lambda\frac{cos^2(\omega)sin(\omega)}{\Vert r\Vert^2} d\xi\middle|\evat{s\in\eta_{s_*}}\right.
}
\end{equation}
with $K(\eta_{s_*})$ meaning \textit{large} $\kappa$ at a neighborhood of $s_*$. At this point the reader must recall that equation (\ref{eq:oplusB}) is describing operations between \textit{local behaviors} and not between values. 
We notice that the operation in the square brackets is the case of Proposition (\ref{basicprop}).
 Indeed as $\vert\kappa (s_*)\vert\rightarrow\infty$ the absolute values \textit{around} zero of the zero crossing signified by $K(\eta_{s_*})\otimes\dot\phi (\eta_{s_*})$ increase without bound while the other term $\left.3\int_0^\lambda\frac{cos(\omega)sin(\omega)}{\Vert r\Vert} d\xi\middle|\evat{s\in\eta_{s_*}}\right.$ is bounded in $\eta_{s_*}$.  In terms of zero crossing localization, the integral term can be omitted since, as already explained, the zero crossing location is dominated by the local term, thus:  
\begin{equation}
\label{eq:oplus}
\eqalign{
&\tdot\phi (\eta_{s_*})=\left[\dot{K}(\eta_{s_*})\otimes A(\eta_{s_*})\right]\oplus\cr
&\left[-K^2(\eta_{s_*})\otimes\dot\phi (\eta_{s_*})\oplus
\left.3\int_0^\lambda\frac{cos^2(\omega)sin(\omega)}{\Vert r\Vert^2} d\xi\middle|\evat{s\in\eta_{s_*}}\right.\right]
}
\end{equation}
\noindent
and for the same reasons finally:
\begin{equation}
\label{eq:last}
\eqalign{
\tdot\phi (\eta_{s_*})&=\left[\dot K(\eta_{s_*})\otimes A(\eta_{s_*})\right]
\oplus\left[-K^2(\eta_{s_*})\otimes\dot\phi (\eta_{s_*})\right]
}
\end{equation}

\begin{table}[]
\setlength{\tabcolsep}{0.70em}
\renewcommand{\arraystretch}{1.0}
\begin{tabular}{||c|c|c|c|c||}

$\otimes$&$zdc$&$zuc$&$cz$&$nz$\\\hline\hline
$zdc$&$cz$&$cz$&$cz$&$zc$\\
$zuc$&$cz$&$cz$&$cz$&$zc$\\
$cz$&$cz$&$cz$&$cz$&$cz$\\
$nz$&$zc$&$zc$&$cz$&$nz$\\\hline

%
\end{tabular}
\centering
\caption{Local algebra for the $\otimes$ operation.}
\label{imsrank1}
\end{table}
This last equation is rewarding as it describes a localization mechanism based on the interaction between zero crossings of two crucial functions: $\dot K$ which defines vertices of significant curvature on the curve, and $\dot\phi$, $\tdot\phi$ which in combination define points of extreme location and robust curvature. Vertices are thus linked to location on the curve in a robust way through zero crossing localization without the need for curvature calculations. The restriction for \textit{high} curvature even though necessary for simplifying analysis it is also a plausible assumption, if we think that in strictly mathematical terms, vertex can be a point where curvature achieves a local extremum even if this extremum is arbitrarily close to zero on either side. It is apparent that these vertices of \textit{low} absolute curvature do not carry the desired perceptual importance. We will now show that the requirement for large $\kappa$ can be enforced to any curve in a principled way without changing its curvature and vertex characteristics.  

\section{Global Vertices and the Concept of Noising}\label{sec:noising1}
The achievement in equation (\ref{eq:last}) is that the problem of solving a differential equation almost \textit{everywhere}, is turned into that of localizing zero crossings. This is not only easier (given the tools in section (\ref{sec:ALG})) but in our case, it also has a greater practical value, because vertex analysis is now turned into a search for local extrema locations in scalar functions which is an easier task unaffected by noise. But there is another \textit{unexpected} benefit in the formulation of equation (\ref{eq:last}). The requirement for large $\kappa$ can be met by the presence of noise and thus noise is turned into a facilitating factor for localizing vertices according to equation (\ref{eq:last}). First we show that the location of the curve's vertices of significant curvature (big $\vert\kappa\vert$) can be inferred from the extrema of $\phi$. This is important because as we already discussed, $\phi$ is a scalar function and its extrema are easily identifiable even after extensive noise on the curve. We proceed formally with the following Proposition that also serves as the definition of a \textit{Global Vertex}:
 
\begin{proposition}\label{propvertex1}
Let $\bmath{\alpha}\in C^3(( 0, \lambda\rbrack, \mathbb{R}^2)$ a closed planar curve of length $\lambda\ne 0$, at least 3 times differentiable as a function defined on the unit circle, $\phi_\alpha (s)$ the total distance function (VAR descriptor), and $\kappa (s)$ the curvature function. If $\phi_\alpha (s)$ achieves a local extremum in $\eta_{s_*}$ in such a way that $\dot\phi_\alpha (\eta_{s_*})$ is a zero crossing of $\dot\phi_\alpha (s)$ and $\tdot\phi_\alpha (\eta_{s_*})$ a zero crossing of $\tdot\phi_\alpha (s)$ then at large curvature values $K(s_*)$ 
there exists a vertex of $\bmath\alpha$ in $\bmath\alpha (\eta_{s_*})$. 
%
\end{proposition}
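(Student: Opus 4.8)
The plan is to read off the local behaviour of each quantity appearing in the reduced third Global--Local equation (\ref{eq:last}) and then use the two algebra tables to force the behaviour of $\dot K$. Since a vertex is by definition a point where $\kappa$ attains a local extremum, it suffices to show that the hypotheses compel $\dot\kappa=\dot K$ to vanish (to have a zero crossing) inside $\eta_{s_*}$; the large-curvature assumption then upgrades this stationary point to a genuine extremum of $\kappa$, i.e. a vertex. Note that (\ref{eq:last}) is itself valid precisely in the regime $\vert\kappa(s_*)\vert\to\infty$ assumed in the statement, so it is legitimate to reason from it here.

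First I would fix the behaviours of the data. Because $\phi_\alpha$ attains a local extremum at $\eta_{s_*}$, Theorem \ref{propcurv}(3) gives $\kappa(s_*)\ne 0$ and, crucially, $A(s_*)\ne 0$, so $A(\eta_{s_*})$ is $nz$; the large-curvature hypothesis makes $-K^2(\eta_{s_*})$ likewise $nz$. By hypothesis $\dot\phi_\alpha(\eta_{s_*})$ and $\tdot\phi_\alpha(\eta_{s_*})$ are both zero crossings ($zc$). Reading Table \ref{imsrank1}, the second bracket of (\ref{eq:last}) satisfies $-K^2\otimes\dot\phi=nz\otimes zc=zc$. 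Now turn to the $\oplus$ structure of (\ref{eq:last}): the left-hand side $\tdot\phi$ is $zc$ and one summand is $zc$, so Table \ref{imsrank} forces the remaining summand $\dot K\otimes A$ to be \emph{not} $nz$, since the entries $nz\oplus zc$ return $nz$, which would contradict the $zc$ left-hand side. Finally, since $A(\eta_{s_*})$ is $nz$, Table \ref{imsrank1} says $\dot K\otimes nz$ is $nz$ precisely when $\dot K$ is $nz$; hence $\dot K(\eta_{s_*})$ cannot be $nz$, so $\dot\kappa$ vanishes in $\eta_{s_*}$ and $\kappa$ has a stationary point there.

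To place this stationary point inside $\eta_{s_*}$ I would invoke Proposition \ref{basicprop}: the $zc$ summand $-K^2\otimes\dot\phi$ has unboundedly large magnitude around $s_*$ as $\vert\kappa(s_*)\vert\to\infty$, so its zero crossing dominates the $\oplus$ and pulls the saturation point of the sum to $s_*$ from either side. This is exactly the mechanism used to pass from (\ref{eq:oplusA}) to (\ref{eq:last}), and it guarantees that the forced zero crossing of $\dot K$ is localized within $\eta_{s_*}$ rather than merely somewhere on the curve. Combining this with the large value $K(s_*)$, the stationary point of $\kappa$ is a local extremum of large absolute curvature, i.e. a vertex of $\bmath\alpha$ lying in $\bmath\alpha(\eta_{s_*})$.

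The hard part will be the final behavioural distinction: the algebra only yields $\dot K\in\{zc,cz\}$, and the degenerate outcome $cz$ (a saddle of $\dot\kappa$, or $\dot\kappa\equiv 0$) does not by itself give a strict local extremum of $\kappa$. I would argue that such $cz$ cases are precisely the parenthetical exceptions in Tables \ref{imsrank} and \ref{imsrank1}, which are non-generic and sign-dependent, and that the large-$\kappa$ regime together with the \emph{simultaneous} zero crossings of both $\dot\phi$ and $\tdot\phi$ rules them out, leaving $\dot K$ a genuine zero crossing. Making this exclusion rigorous, rather than appealing to genericity, and verifying that the two saturation points (of $\dot\phi$ and of $\tdot\phi$) really do coincide in the limit, are the two places where the proof needs the most care.
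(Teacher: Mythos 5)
Up to the dichotomy, your argument is exactly the paper's: item 3 of Theorem \ref{propcurv} turns the extremum hypothesis into $\kappa(s_*)\ne 0$ and $A(s_*)\ne 0$, hence $-K^2(\eta_{s_*})\otimes\dot\phi(\eta_{s_*})$ is a $zc$; since the left side $\tdot\phi(\eta_{s_*})$ is a $zc$, the tables force $\dot K(\eta_{s_*})\otimes A(\eta_{s_*})$, and therefore (because $A$ is $nz$) $\dot K(\eta_{s_*})$ itself, to be either $zc$ or $cz$. The genuine gap is in your final paragraph. You require a \emph{strict} local extremum of $\kappa$ and therefore try to eliminate the $cz$ branch by genericity, via the parenthetical exceptions of the tables. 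That exclusion cannot be made rigorous because the claim you would be proving is false: nothing in the hypotheses prevents $\bmath\alpha$ from containing an exact circular arc through the location extreme $s_*$ (think of a stadium-like contour whose remotest point sits on a circular cap). There $\dot\kappa\equiv 0$ on $\eta_{s_*}$, so $\dot K(\eta_{s_*})$ is honestly $cz$; neither the large-$\kappa$ regime nor the simultaneity of the $\dot\phi$ and $\tdot\phi$ crossings perturbs this, yet the proposition is meant to apply at such a point. The paper closes the branch by inclusion, not exclusion: if $\dot K(\eta_{s_*})$ is $cz$, then (again using $A(s_*)\ne 0$ and $\kappa(s_*)\ne 0$ from Theorem \ref{propcurv}) the curvature is locally constant and non-zero on $\eta_{s_*}$, and under the paper's notion of vertex --- a local, not necessarily strict, extremum of curvature, consistent with the discussion following equation (\ref{eq:last}) --- such a point \emph{is} a vertex. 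Both branches thus end in ``there is a vertex in $\bmath\alpha(\eta_{s_*})$'' with no genericity argument needed.

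A secondary remark: your appeal to Proposition \ref{basicprop} to ``pull the saturation point to $s_*$'' is not part of the paper's proof of this proposition; that mechanism was already spent in deriving equation (\ref{eq:last}) from (\ref{eq:oplusA}) and (\ref{eq:oplus}), which your argument, like the paper's, takes as given. Re-invoking it is harmless but redundant; the substantive content of the proof is the two-branch case analysis, and it is only the handling of the $cz$ branch that needs to be repaired as above.
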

\begin{proof}

Since  $\phi_\alpha(s_*)$ is an extreme of $\phi_\alpha (s)$ in a way that both $\tdot\phi_\alpha(\eta_{s_*})$ and  $\dot\phi_\alpha(\eta_{s_*})$ are zero crossings of $\tdot\phi_\alpha(s)$ and $\dot\phi_\alpha(s)$ respectively, 
from Theorem (\ref{propcurv}) we get that $\kappa (s_*)\ne0$ therefore $K^2(\eta_{s_*})\otimes\dot\phi_\alpha(\eta_{s_*})$ is a zero crossing. This according to equation (\ref{eq:last}) means that $\dot K(\eta_{s_*})\otimes A(\eta_{s_*})$ will have to allow one of the following cases: 
	  \begin{itemize}
	  \item $\dot K(\eta_{s_*})\otimes A(\eta_{s_*})$ is also a zero crossing. But since from Theorem (\ref{propcurv}), $A(s_*)\ne0$ we conclude that $\dot K(\eta_{s_*})$ is a zero crossing and in $\bmath{\alpha}(\eta_{s_*})$ there is a vertex of $\bmath{\alpha}$.
	  \item $\dot K(\eta_{s_*})\otimes A(\eta_{s_*})$ is locally zero. But again from Theorem (\ref{propcurv}), $A(s_*)\ne0$ thus $\dot K(\eta_{s_*})$ is locally zero. Since also from Theorem (\ref{propcurv}) $\kappa (s_*)\ne0$, we conclude that $\bmath{\alpha}(s)$ has locally constant non zero curvature at $\eta_{s_*}$ thus in $\bmath{\alpha}(\eta_{s_*})$ there is a vertex of $\bmath{\alpha}$.  
	   \end{itemize}
So in both of the above cases in $\bmath{\alpha}(\eta_{s_*})$ there is a vertex of $\bmath{\alpha}$ and the proof is complete\end{proof}
\begin{figure}[]
\begin{center}
\includegraphics[height=13cm]{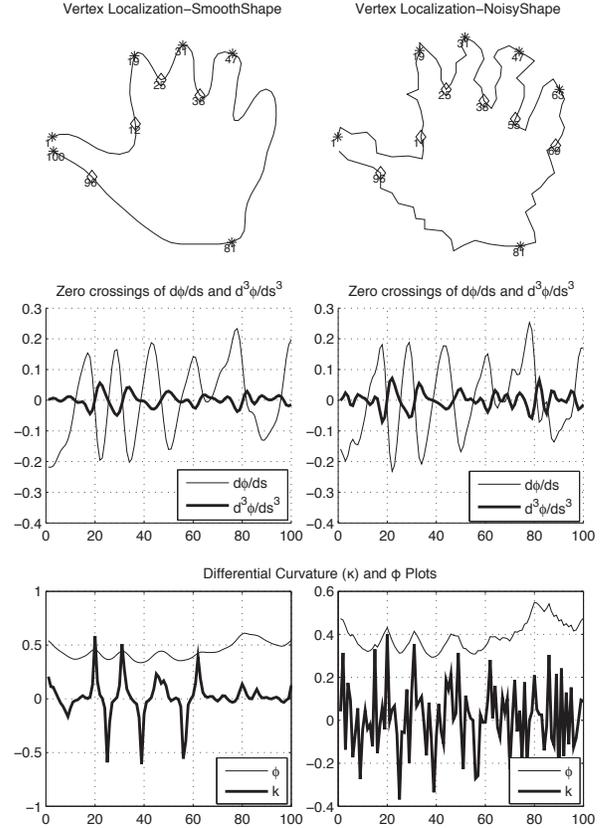}
\caption{\textit{Global Vertex} localization for smooth and noise versions of the same Kimia Silhouette using co- localization of  $\phi$ and $\ddot\phi$ extreme points. Stars and diamonds are curvature's local maxima and minima respectively. Points are marked on the shapes for every 10th point in a total of 100 points per shape and are also assumed as the x axis in all plots. The co-localization of zero crossings that appear in the second row of plots are validated against $\phi$ and curvature plots appearing in the last row. We notice that more points are correctly identified in the noisy version. We also notice that the proposed method produces correct results even though the differential curvature descriptor has collapsed in the noisy case.}
\label{fig:V-V3}
\end{center}
\end{figure} 
\noindent
~\\[1pt]
Vertices identified according to Proposition \ref{propvertex1} will be called \textit{Global Vertices} 
and are localized \textit{natively}, not relying that is, on smooth curvature calculations. 
As Proposition (\ref{propvertex1}) suggests, a method of identifying \textit{Global Vertices}, points of extreme location and curvature in the \textit{collocation} of $\dot\phi$ and $\tdot\phi$ zero crossings, is in place. Under this method one wonders what the effect of noise might be. The unexpected answer is that the requirement for big $\kappa$ (in the absolute sense) is satisfied at all the curve points by the existence of noise. Noise, therefore is what makes equation (\ref{eq:last}) valid at all points on the curve. On a smooth curve, big $\kappa$s will exist only at the few high curvature points, the method therefore is valid for smooth curves as well but will identify vertices only if  big $\kappa$s are well located \textit{globally} on the curve. One can then wonder how much noise is enough to make equation (\ref{eq:last}) valid and weather the fact that big $\kappa$s are everywhere (after existing or induced noise on the curve) obscures the underlying shape. In Fig.(\ref{fig:ncurv}) we demonstrate a discrete case where additive noise induced in a principled manner increases the curvature of the resulting curve at all of its points, but at the same time retains the initial curvature information of the initial curve, while also enriches the tangent directions around the initial points. Thus the concept of noising is conceived. The continuous case can be similarly inferred if instead of a piecewise curve one considers infinitesimal arc elements in a suitable manner. 

\begin{figure}[]
\begin{center}
\includegraphics[height=5cm]{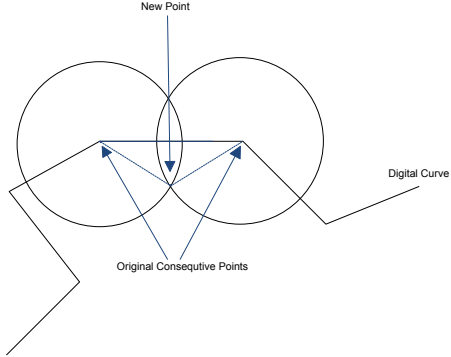}
\caption{The method of noising as opposed to smoothing is illustrated here. New points are added to the contour with the purpose to enrich the tangent directions around the original points. This will facilitate the proposed method in identifying vertices.}
\label{fig:putnoise}
\end{center}
\end{figure} 

\subsection{Finitesimals: Infinitesimals in the Large}
The amount of noise needed to make equation (\ref{eq:last}) valid can be inferred from equations (\ref{eq:oplusB}) and (\ref{eq:oplus}) where the terms $C=3\left.\int_0^\lambda\frac{cos(\omega)sin(\omega)}{\Vert r\Vert}d\xi\middle|\evat{s\in\eta_{s_*}}\right.$ and $D=3\left.\int_0^\lambda\frac{cos^2(\omega)sin(\omega)}{\Vert r\Vert ^2}d\xi\middle|\evat{s\in\eta_{s_*}}\right.$ have to be compensated by the slope of $\dot\phi(\eta_{s_*})$ and the magnitude of $K(s_*)$ and $K^2(s_*)$, respectively. The continuity of all functions involved in the above analysis guaranties a smooth transition of zero crossings as $\kappa$ increases against $C$ and $D$ and permits a valid extension from infinitesimal to finite neighborhoods around $s_*$ while keeping the validity of all the previous results. We thus permit $\eta_{s_*}$ to aquire finite \textit{length} and become $H_{s_*}$, signifying a compact interval of finite nonzero length around $s_*$ such that $\dot\phi(H_{s_*})$ and $A(H_{s_*})$ exhibit the same local behavior with $\dot\phi(\eta_{s_*})$ and $A(\eta_{s_*})$ respectively around $s_*$. We will use the term \textit{finitesimal} to refer to $H_{s_*}$ as an extension of the infinitesimal neighborhood $\eta_{s_*}$ such that its image through $\dot\phi$ and $A$ retains the same local behavior in the sense of section \ref{sec:ALG}. We can now see again equation (\ref{eq:last}) as:
\begin{equation}
\label{eq:last1}
\eqalign{
\tdot\phi (H_{s_*})&=\left[\dot \kappa_{_{H_{s_*}}}\otimes A(H_{s_*})\right]
\oplus\left[-{\kappa^2}_{_{H_{s_*}}}\otimes\dot\phi (H_{s_*})\right]
}
\end{equation}
\noindent
which is always valid as long as $\kappa$ is a function of the length of $H_{s_*}$. Indeed one can think in an inverse manner and ask that given $s_*$ (as a zero crossing location of $\dot\varphi$), $C$, $D$ and $H_{s_*}$ as a finite neighborhood around $s_*$, what is the minimum $\kappa$ that would guarantee a zero crossing of $\tdot\varphi$ in $H_{s_*}$? The answer to this question is a finite curvature, thus with the introduction of \textit{finitesimals} the requirement of a large $K(s_*)$ that goes to infinity in order of the zero crossings to coexist in an infinitesimal neighborhood $\eta_{s_*}$, has been relaxed to that of a \textit{large enough} curvature $\kappa_{_{H_{s_*}}}$ that would guaranty the existence of both zero crossings of $\dot\phi$ and $\tdot\phi$ in $H_{s_*}$ and thus, according to Proposition (\ref{propvertex1}), a vertex of $\bmath\alpha$, in the \textit{finitesimal} neighborhood $H_{s_*}$. The \textit{finitesimal} neighborhood $H_{s_*}$ is conceptualized in relation to the \textit{infinitesimal} neighborhood $\eta_{s_*}$ and the local behavior of the pertinent functions at $\eta_{s_*}$.  In the concept of $H_{s_*}$, the \textit{infinitesimal} neighborhood $\eta_{s_*}$ acquires the \textit{minimum} size necessary to meet certain constrains of the specific implementation while the pertinent functions keep the same local behavior there. This is in fact a practical definition that also captures the implementation details of noising. 
At this point a practical conclusion is also reached. Since in $C$ and $D$ we have a measure of the amount of perturbation ($\kappa_{H_*}$) needed at $s_*$ to bring in equation (\ref{eq:last1}) one could see this equivalently, as the tolerance in the \textit{collocation} of $\dot\varphi$ and $\tdot\varphi$ one could allow for the existence of a vertex in $H_*$.

\subsection{Noise as an Enabler}
The following \textit{noising} algorithm is now designed, consisting of controlled perturbations on the boundary of noisy or smooth shapes.
This process can be performed in an additive manner to the existing boundary not affecting the initial boundary points. In the discrete case of a digital curve, for each pair of consecutive points on the initial boundary a \textit{new point} will be added at the intersection of the circles centered at the original points and having equal radii of a certain length greater than half the distance between the two original points (Fig.\ref{fig:putnoise}). 
Noise applied in this principled way has three global consequences all beneficial to the proposed global method (Fig.\ref{fig:ncurv}):
\begin{itemize}
\item The absolute local curvature at each point on the curve increases, thus equation (\ref{eq:last1}) can be appropriately applied.
\item The relative rate of change of each point's \textit{total distance} to the rest of the curve is not changed significantly, thus  \textit{global curvature} calculations are also not affected.
\item The tangent directions around each point are enriched. This helps $\dot\phi$ to better identify remote points on the curve and $\tdot\phi$ to better measure the relative change of their distance to the rest of the curve. Native Global Vertex identification according to Proposition (\ref{propvertex1}) is thus facilitated.
\end{itemize}

\begin{figure}[]
\begin{center}
\includegraphics[height=3.9cm]{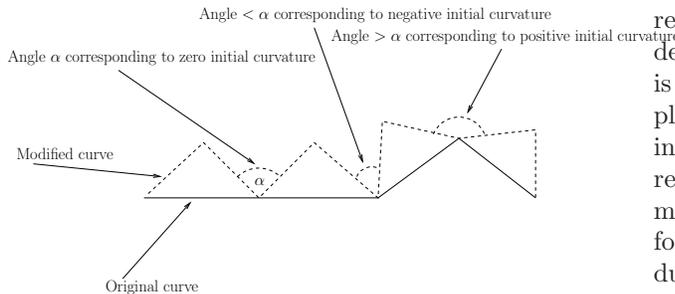}
\caption{The method of additive noising is illustrated here. New points are added to the contour but the initial curvature information is retained.}
\label{fig:ncurv}
\end{center}
\end{figure} 

One can then realize that the \textit{negative local} effects of noise have been turned into \textit{positive global} effects. For the proposed global method in particular, noise enables equation (\ref{eq:last1}) and thus the identification of vertices according to Proposition (\ref{propvertex1}). 
This emerges as a paradox, since vertices are differentials of a higher order than curvature, thus even more sensitive to noise than curvature is with traditional methods. 


It is important to notice that according to this method, Global Vertices are detected directly, by the collocation of the zero crossings as above, without the need for curvature calculations. In other words, the method does not rely on a smooth estimation of curvature around vertices but it is a \textit{native} method for calculating Global Vertices directly, albeit them being of a higher differential order than curvature. Another issue worth noticing, is that the above procedure of noising can be applied recursively to form neighborhoods of increasing differential order \textit{around} the initial curve points, resulting in an analogous concept to that of incremental smoothing. \textit{Noising}, as a preprocessing step for global methods, may be viewed as a conceptual duality to what smoothing is for local methods. 

In the experimental section that follows, the above methods for curvature calculation (according to (\ref{eq:L15aaa})) and vertex localization (according to Proposition (\ref{propvertex1})) are validated. A comparison to LAII in these tasks, for smooth and noisy versions of shapes is also presented.

\section{Experimental Validation}\label{sec:exp}

\begin{figure}[]
\begin{center}
\includegraphics[height=13cm, width=9cm]{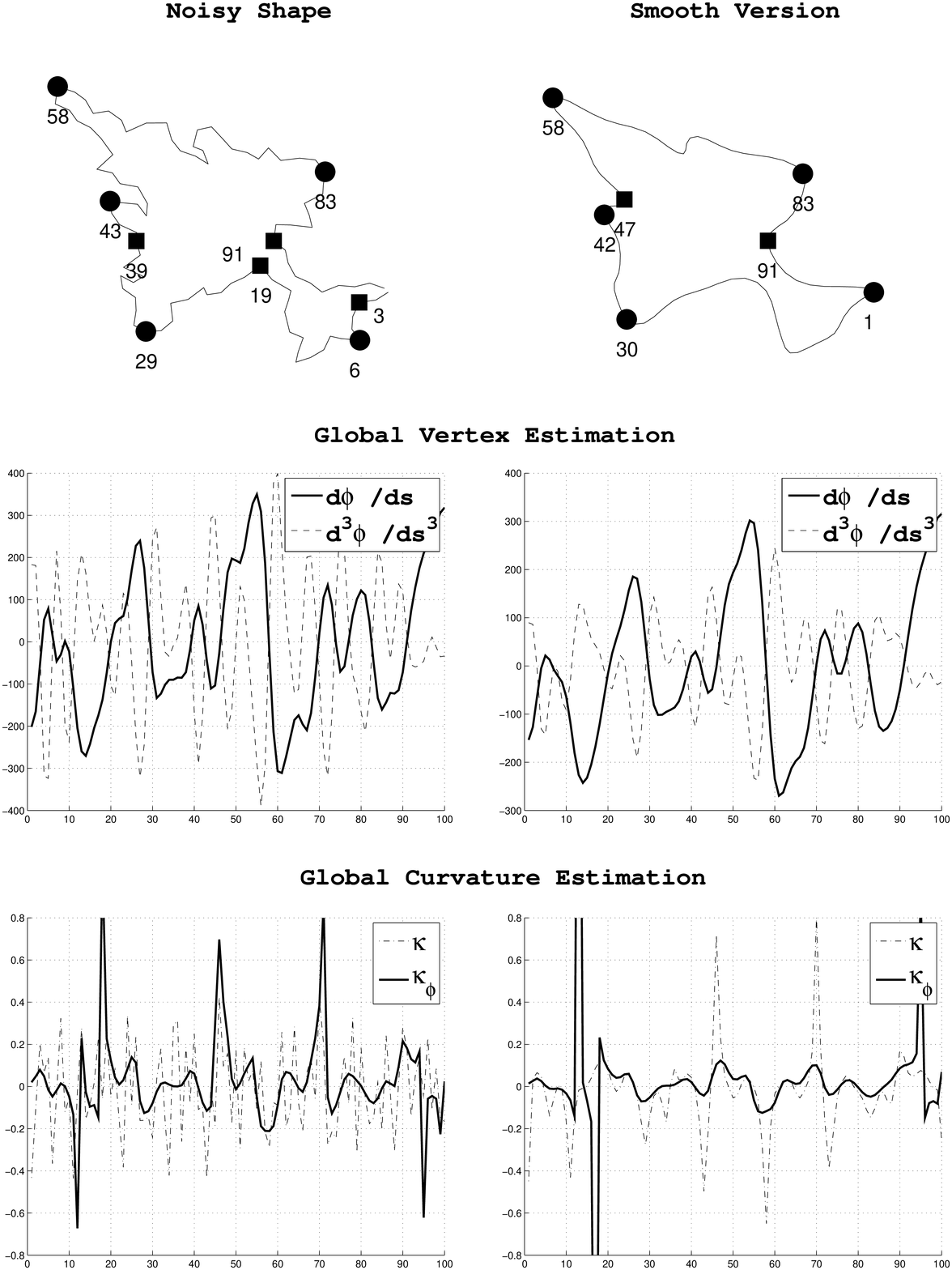}
\caption{Curvature estimation and \textit{Global Vertex} localization of a typical KIMIA silhouette. At the top row of plots, solid circles and squares are curvature local minima and maxima respectively, identified by the proposed method for smooth and noisy versions of the same KIMIA silhouette. Compare these to the stars and diamonds identified for the same shapes by the LAII method in Fig.(\ref{fig:comp2}). At the middle row of plots, $\dot\phi$ and $\tdot\phi$ are shown, the collocation of which at a zero crossing, defines vertices according to the proposed method. At the bottom row of plots we see the differential curvature ($\kappa$) and the curvature estimator $\kappa_{\phi}$(based on equation (\ref{eq:L15aaa})), valid only at points where $A\ne 0$, for both noisy and smooth shapes. Notice also the robustness of the proposed estimator in the noisy case where the differential curvature has collapsed.}
\label{fig:comp1}
\end{center}
\end{figure} 
In this section we validate the above methods with experiments. First we demonstrate the localization of Global Vertices and how it is improved by noise. Then, in (\ref{sec:comp}), in the lack of other \textit{global} method for vertex localization, we compare with LAII\cite{ssoatto}, as the most robust among other local methods. LAII is a \textit{localized} method primarily for noise resistant curvature calculations. By mean of integrals, LAII defines certain 2D circles around every point on the curve. These circles \textit{absorb} noise by providing robust curvature estimations in a certain \textit{localized} sense. If seen as a robust curvature method that uses integrals, LAII is comparable to the proposed herein global method. Vertices however, according to LAII, will be localized as curvature extrema according to the traditional differential definition, whereas in the proposed approach \textit{Global Vertices} are disconnected from curvature, the main contribution in this paper.  In the lack of other global method, comparisons are performed to LAII in both curvature calculations and vertex localization in noisy and smooth shapes. The experiments show that while noise is absorbed and its negative effects are minimized in LAII, the proposed global approach actually benefits from noise in vertex localization. As noise increases LAII's (and any other local method's in that respect) fight is in minimizing the \textit{negative local effects} of noise, while the proposed method gains better performance by exploiting the \textit{positive global effects} of noise. Curvature calculations according to the global method are also not affected significantly by noise. The comparison to LAII reveals that the advantages of the proposed approach in both vertex identification and curvature estimation stem from the global conceptual approach, a fact that turned noise on our side. 
Another important advantage, unique to the proposed \textit{global} method with regard to LAII and other local methods is in differentiating vertices with respect to context. Indeed local methods, as was discussed in the introductory section, have an inherent deficiency to incorporate context. The proposed method on the other hand, demonstrates how context can be conceptualized naturally when global approaches are employed. 


\subsection{Vertex Localization and the Effect of Noise}

\begin{figure}[t]
\begin{center}
\includegraphics[height=6cm]{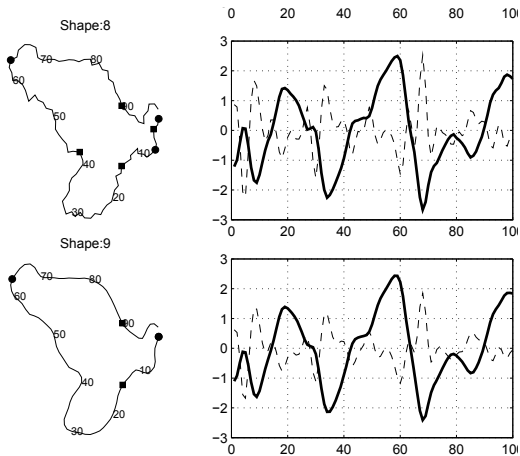}
\caption{Solid line is $\dot\phi$, dashed line is $\tdot\phi$. The method identifies \textit{undisputed} points of extreme curvature (Global Vertices) on the shape's boundary in the co-localization of the above zero crossings. Local maxima curvature points (convex) are marked as solid circles, local minima (concave) are marked as solid squares. \textit{More} points are \textit{correctly} identified in the noisy version.}
\label{fig:small}
\end{center}
\end{figure}

\begin{figure}[t]
\begin{center}
\includegraphics[height=10cm]{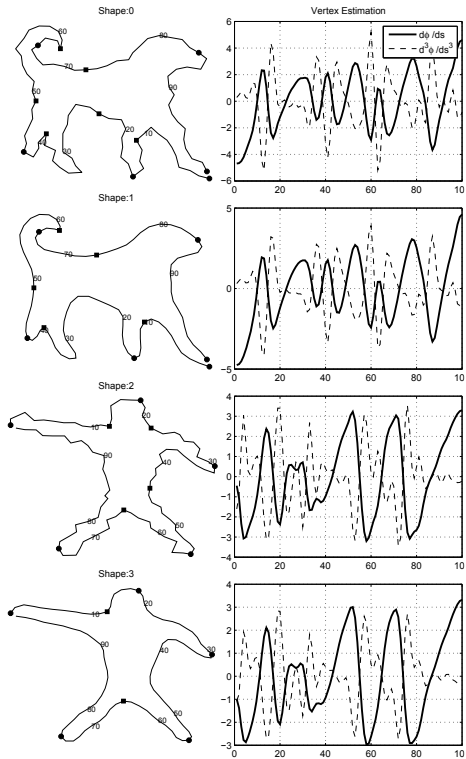}
\caption{Caption as in Fig. \ref{fig:small}.}
\label{fig:S1}
\end{center}
\end{figure}

In Fig.(\ref{fig:V-V3}) a noisy silhouette is compared to its smoothed version. There, we see that the noise is actually improving vertex identification by introducing new vertices, not being identified in the smooth version. The reason this happens as we explained before, is due to the nature of the integral descriptors involved in the proposed method of localizing vertices. In a smoothed curve, a point of maximum curvature may or may not appear as extreme point on the boundary, depending on the location of the point with respect to the rest of the curve.  Point No. 63 e.g. is not identified in the smoothed version since $\dot\phi$, even though it is close, it does not actually achieve zero crossing at a neighborhood of 63. Applying noise in the neighborhood of 63 leads to greater diversity in the tangent directions around 63 and in the noisy version we see $\dot\phi$ finally achieving a \textit{zero crossing} there and 63 correctly being identified as a vertex. This effect of noise is valid only for points that are close of being location extremes on the boundary and is therefore location dependent and has no effect for points that are not well located \textit{globally}, e.g. point 15 where we see that, due to its specific  location on the shape, no matter how much noise we apply to its neighborhood, it will never be identified as a vertex. In Figs. (\ref{fig:small},\ref{fig:S1},\ref{fig:S2}), we also see how vertex identification is improved by inducing noise into various MPEG and KIMIA silhouettes. 

\subsection{Comparison to LAII for Vertex Localization and Curvature Estimation}\label{sec:comp}
A comparison to LAII \cite{ssoatto} for estimating curvature but also for localizing vertices in noisy shapes is presented in this experiment. LAII is a low level descriptor of similar complexity to the proposed method, that generalizes the concept of curvature over the noisy segments of curves. A circle of certain radius is used, centered at each point, and the curvature is calculated as the ratio of the area of this circle that lies in the interior of the closed contour. In the case of zero curvature, e.g. a noisy straight line, half of the disk will lie in the interior of the shape, whereas in the case of infinite curvature this portion will tend to zero or to one depending on the sign of the curvature at this point. LAII therefore uses an integral (area of the circle) to estimate curvature but its essentially a local descriptor.

\begin{figure}[]
\begin{center}
\includegraphics[height=8.8cm]{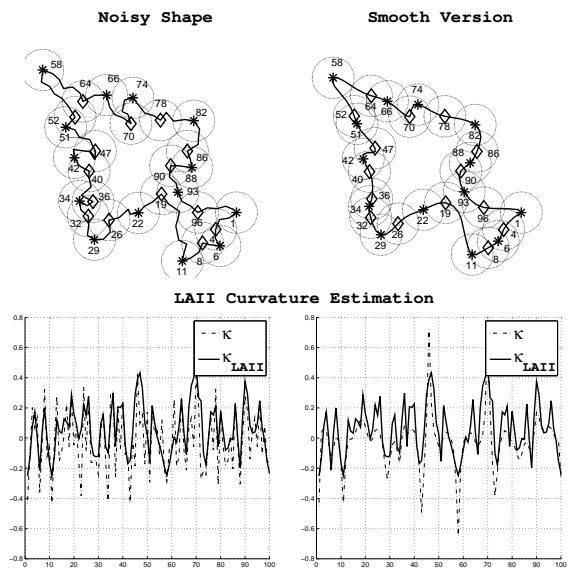}
\caption{Curvature estimation and vertex localization by means of the LAII method\cite{ssoatto}. At the top row plots, stars and diamonds are curvature local minima and maxima respectively, identified by the LAII method from its curvature measurements, both for smooth and noisy versions of the same KIMIA silhouette. Compare this with the circles and squares identified by the proposed method in Fig. (\ref{fig:comp1}). At the bottom row of plots we see the differential curvature ($\kappa$) and the curvature estimator according to LAII($\kappa_{LAII}$). LAII shows high resistance to noise, its curvature measurements are not affected by noise. However, the results in both curvature estimation and vertex localization are more fuzzy and less intuitive in comparison to the ones achieved by the proposed method in Fig.(\ref{fig:comp1}). The circles LAII uses around each identified point, are also plotted for easy visual verification.}
\label{fig:comp2}
\end{center}
\end{figure} 

Our implementation of LAII is as follows: Starting from a binary image of the shape to be encoded, first we extract the boundary. The boundary is then discretized by sampling 100 equally spaced points on it.Then using a circular kernel (constructed as a binary image of a circle of radius 15, as is suggested in \cite{ssoatto}) we convolve the filter with the shape image only at the boundary points. The values of the convolution at each of the boundary points are the values of the LAII estimated curvature at these points. For vertex localization we pick the LAII points of local minima and maxima, marked with stars and diamonds accordingly in Fig. (\ref{fig:comp2}) .

We compare this implementation of LAII to the proposed method of calculating curvature from equation (\ref{eq:L15aaa}), whereas vertex identification is performed as in the previous section by examining  the co-localization of the local extrema of the total distance function $\phi$ with those of its second derivative $\ddot\phi$. For both methods the same extracted contours were used. Many advantages of the proposed method are apparent in this experiment. A comparison of the curvature estimators $\kappa_{\phi}$ and $\kappa_{LAII}$ in Fig.(\ref{fig:comp1}) and Fig.(\ref{fig:comp2}) respectively, reveals better accuracy for the proposed method $\kappa_{\phi}$. The comparison is performed against the differential curvature which, as we see in the figures, collapses in the presence of noise. Also the localization of vetices, marked in the top row of plots as filled squares and circles for the proposed method in Fig.(\ref{fig:comp1}), is more accurate and intuitive than the respective stars and diamonds marked as such by the LAII method in Fig.(\ref{fig:comp2}). In both the  smooth and noisy versions the LAII localization of vertices is often mislead by local boundary formations. The middle row plots in Fig.(\ref{fig:comp1}) reveal the co-localization mechanism of the proposed method.    

%

\begin{figure*}[t]
\makebox[\textwidth][c]{
\begin{minipage}[b]{7cm}
\includegraphics[height=10cm]{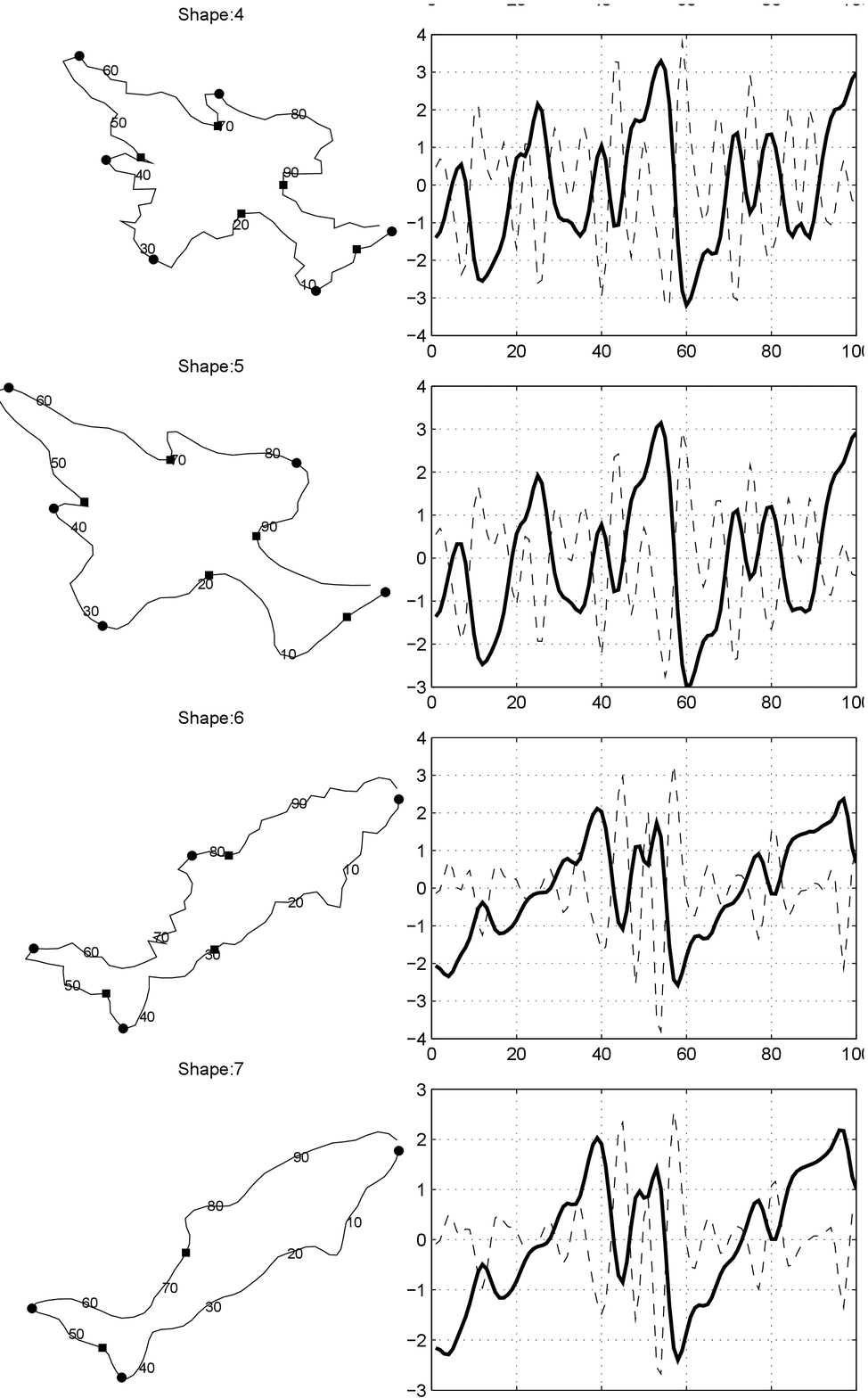}
\end{minipage}
\begin{minipage}[b]{7cm}
\includegraphics[height=10cm]{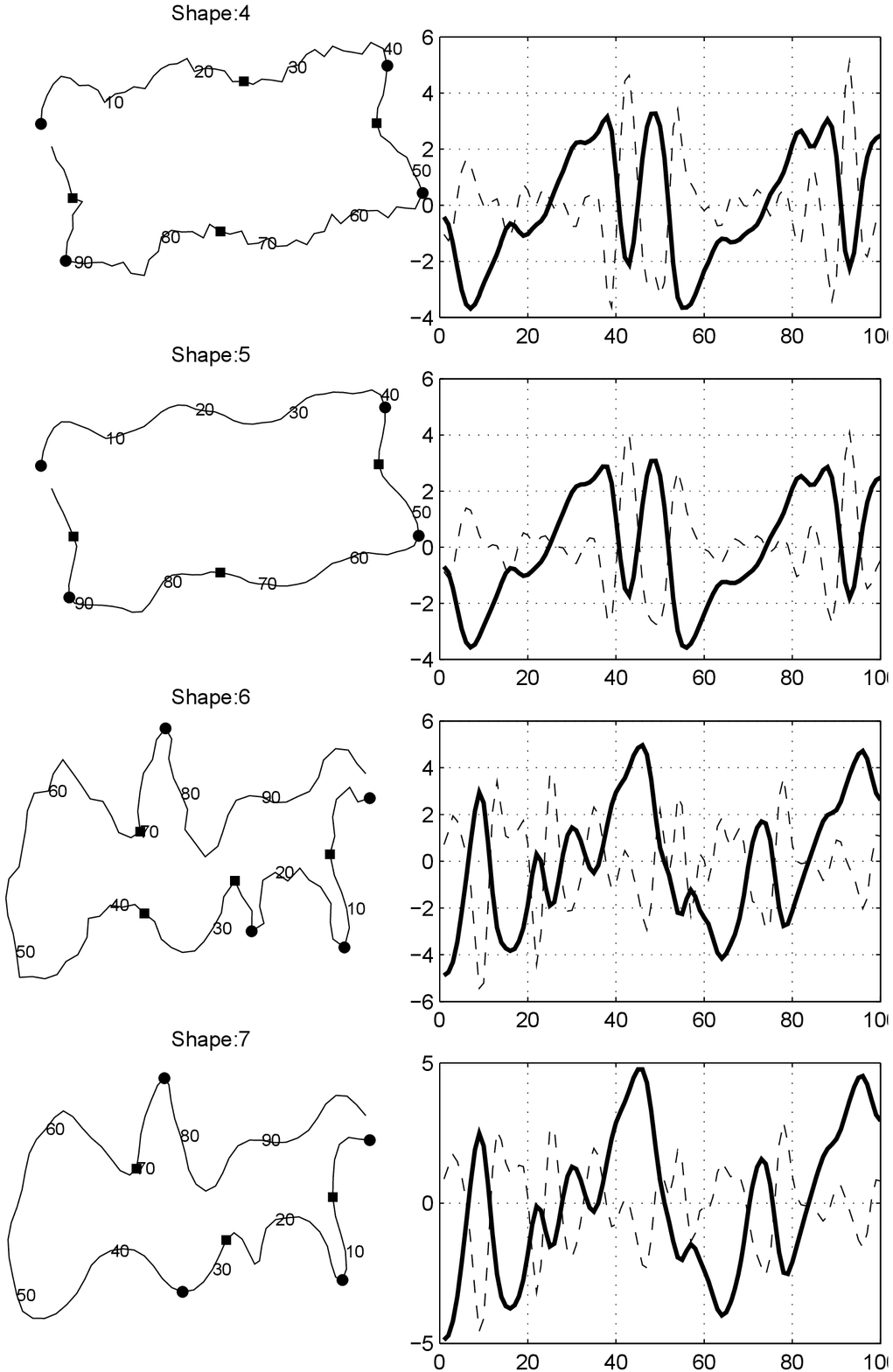}
\end{minipage}
}
\caption{Caption as in Fig. \ref{fig:small}.}
\label{fig:S2}
\end{figure*}

%



\section{Discussion}\label{sec:disc}
The proposed method is attempting to bring in perceptual characteristics to the low level task of vertex identification by combining a robust vertex estimator $\ddot\phi$ and a global position estimator $\phi$. The non trivial local extrema of the former are identified in the \textit{zero crossings} of $\tdot\phi$ and is already a robust vertex estimator, since it is based on the distance to the rest of the curve and not on local to the curve differentials. In the experiments and also according to the theory in the manuscript, one could follow that $\ddot\phi$ can be used as a vertex estimator by itself and vertices could be defined directly on its local extrema (zero crossings of $\tdot\phi$) accordingly. However, although robust in comparison to the differential alternatives, vertices identified this way would be according to the formal mathematical definition, lacking any attempt of differentiation with respect to their importance. This is where $\phi$ plays an important role. 

The introduction of $\phi$ in the process of identifying vertices, provides an additional capability, that of acquiring a \textit{global view} of the locations of the vertex points, introducing the concept of \textit{Global Vertices}, as \textit{undisputed} points of extreme curvature that due to their location at the extremes of the shape (zero crossings of $\dot\phi$) are characterized as perceptually important \footnote{The fact that remote points are perceptually important has been demonstrated in \cite{CVIU}, where shape matching scores in benchmark datasets where improved by permitting correspondences only to \textit{remote} points.}. The key concept behind \textit{Global Vertices} is that extreme location implies extreme curvature (power of location). 
Extreme points on the curve cannot exist without extreme curvature there. The inverse does not hold for all vertices, thus the distinction between Global (according to Proposition \ref{propvertex1}) and regular (according to the traditional differential definition) vertices.  

One can thus say that the \textit{co-location} of the zero crossings of $\tdot\phi$ and $\dot\phi$, that the proposed method uses, can be interpreted as combining extreme curvature with extreme location, thus an attempt to introduce \textit{global perceptual characteristics} to a problem that is natively of a local nature. In the experiments we have used characteristic shapes from the KIMIA and the MPEG datasets in pairs of smooth and noisy versions. Notice that while $\tdot\phi$ zero crossings alone could identify \textit{regular} vertices, the co-location with $\dot\phi$ zero crossings results in picking only the \textit{Global} vertices, the \textit{undisputed} points of extreme location, in an attempt to introduce perceptual \textit{meaningfulness} in the process. Since extreme location is a global feature, adding noise to the shape has no significant effect. In fact, the only result is to introduce a few more extreme points that were very close, but didn't quite appear as such in the smooth version and this is the mechanism behind improving the results by noising. 

One could also notice here that $\ddot\phi$ is used as a vertex estimator and not as a curvature estimator, since curvature is estimated by equation (\ref{eq:L15aaa}) in the manuscript, 
thus the proposed method is calculating vertices \textit{directly}, bypasses curvature  and brings noise on our side. As is shown in the experiments, there are cases of shapes where different noisy versions favor certain vertices. For this reason, the method in practice runs for various noisy scenarios finally considering the \textit{union} of the vertices identified in each of the scenarios. Furthermore, one could control the accuracy of zero crossings co-localization by adjusting the width of the \textit{finitesimal} interval in which a co-localization is assumed to have occurred, achieving this way different degrees of location dependence. Further tuning of the method can be achieved by posing restrictions on the slope of the zero crossings or the magnitude of $\ddot\phi$, resulting in further \textit{meaningful} differentiation among vertices. 

%

\section{Conclusion}\label{sec:conc}

A global definition of curvature, through integrals of global descriptors, instead of differentials, is used to describe a noise enabled method for \textit{meaningful} vertex localization in unknown shapes. The method gives valid results on both smooth and noisy shapes. Furthermore, the results of the method are improved in the presence of noise, thus the concept of \textit{noising} emerges as a paradox. This counterintuitive result was indicated by the theoretical findings and was validated by experiments. A comparison to LAII, a low complexity method of estimating curvature in the presence of noise, reveals the advantages of the proposed approach versus local or localized methods.

{\small
\bibliographystyle{splncs}
\bibliography{egbib}
}
\end{document}